
\documentclass{article}

\usepackage{microtype}
\usepackage{graphicx}
\usepackage{subfigure}
\usepackage{booktabs} 

\usepackage{hyperref}
\usepackage{multirow}
\usepackage{threeparttable}


\usepackage[accepted]{icml2024}


\usepackage{amsmath}
\usepackage{amssymb}
\usepackage{mathtools}
\usepackage{amsthm}
\usepackage[shortlabels]{enumitem}
\usepackage{adjustbox}
\usepackage{subcaption}

\usepackage[capitalize,noabbrev]{cleveref}

\theoremstyle{plain}
\newtheorem{theorem}{Theorem}[section]
\newtheorem{proposition}[theorem]{Proposition}

\newtheorem{example}[theorem]{Example}

\theoremstyle{definition}
\newtheorem{definition}[theorem]{Definition}

\theoremstyle{remark}

\usepackage[textsize=tiny]{todonotes}

\icmltitlerunning{Lyapunov-stable Neural Control for State and Output Feedback}

\begin{document}

\twocolumn[
\icmltitle{Lyapunov-stable Neural Control for State and Output Feedback:\\ A Novel Formulation}



\icmlsetsymbol{equal}{*}

\begin{icmlauthorlist}
\icmlauthor{Lujie Yang$^*$}{mit}
\icmlauthor{Hongkai Dai$^*$}{tri}
\icmlauthor{Zhouxing Shi}{ucla}
\icmlauthor{Cho-Jui Hsieh}{ucla}
\icmlauthor{Russ Tedrake}{mit,tri} 
\icmlauthor{Huan Zhang}{uiuc}
\end{icmlauthorlist}

\icmlaffiliation{mit}{MIT}
\icmlaffiliation{tri}{Toyota Research Institute}
\icmlaffiliation{ucla}{UCLA}
\icmlaffiliation{uiuc}{UIUC}

\icmlcorrespondingauthor{Lujie Yang}{lujie@mit.edu}
\icmlcorrespondingauthor{Huan Zhang}{huan@huan-zhang.com}

\icmlkeywords{Machine Learning, ICML}

\vskip 0.3in

]



\printAffiliationsAndNotice{\icmlEqualContribution} 

\begin{abstract}
Learning-based neural network (NN) control policies have shown impressive empirical performance in a wide range of tasks in robotics and control. However, formal (Lyapunov) stability guarantees over the region-of-attraction (ROA) for NN controllers with nonlinear dynamical systems are challenging to obtain, and most existing approaches rely on expensive solvers such as sums-of-squares (SOS), mixed-integer programming (MIP), or satisfiability modulo theories (SMT). In this paper, we demonstrate a new framework for learning NN controllers together with Lyapunov certificates using fast empirical falsification and strategic regularizations. 
We propose a novel formulation that defines a larger verifiable region-of-attraction (ROA) than shown in the literature, and refines the conventional restrictive constraints on Lyapunov derivatives to focus only on certifiable ROAs. The Lyapunov condition is rigorously verified post-hoc using branch-and-bound with scalable linear bound propagation-based NN verification techniques. The approach is efficient and flexible, and the full training and verification procedure is accelerated on GPUs without relying on expensive solvers for SOS, MIP, nor SMT.
The flexibility and efficiency of our framework allow us to demonstrate Lyapunov-stable output feedback control with synthesized NN-based controllers and NN-based observers with formal stability guarantees, for the first time in literature. Source code at \href{https://github.com/Verified-Intelligence/Lyapunov_Stable_NN_Controllers}{github.com/Verified-Intelligence/Lyapunov\_Stable\_NN\_Controllers}
\end{abstract}
\vspace*{-0.4cm}
\begin{figure}
\centering
	\includegraphics[width=0.48\textwidth]{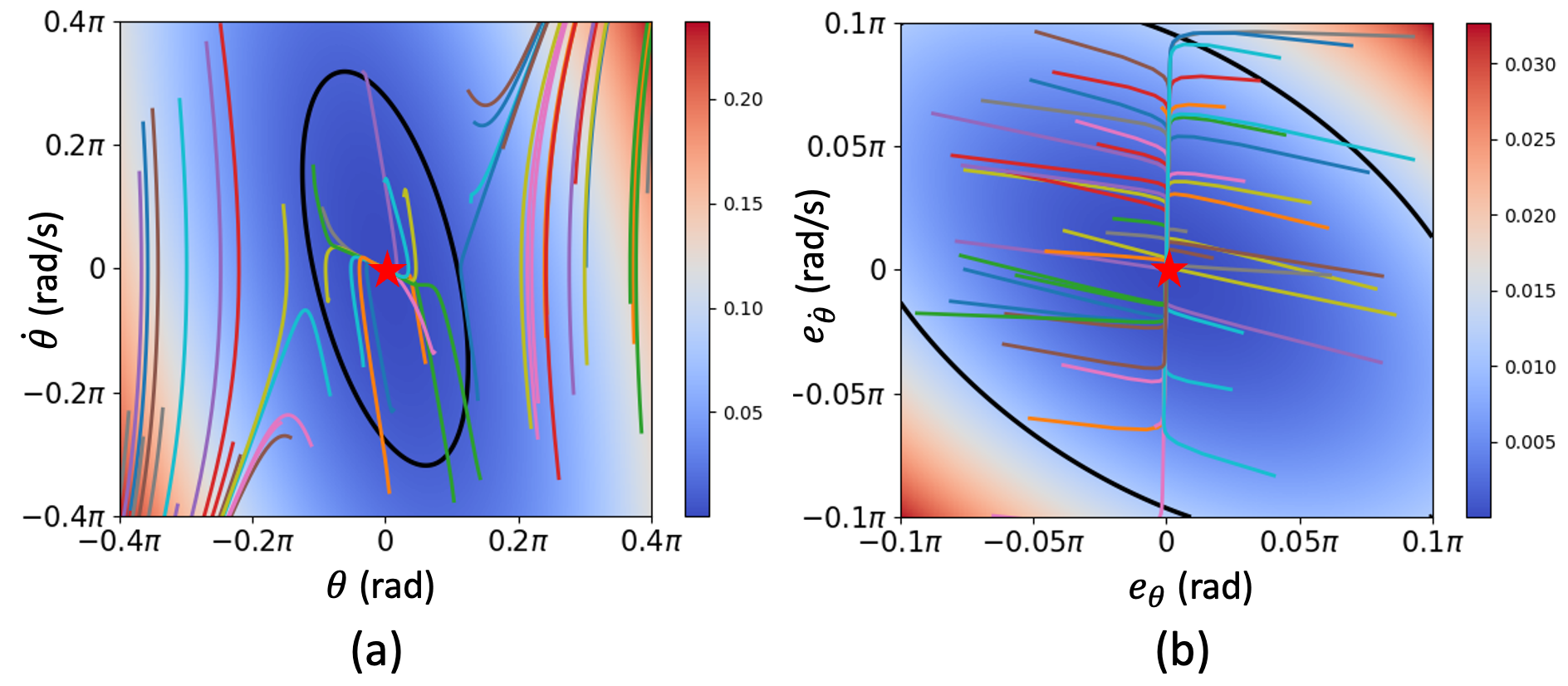}
	\caption{Phase portrait (colorful trajectories) of simulating the angle-observed pendulum with the synthesized neural-network controller and observer in different 2-dimensional slice. The torque limit $|u|\le \frac{mgl}{3}$ is challenging for both synthesis and verification. The certified ROA is outlined by the black contour. To the best of our knowledge, our method provides the first formal neural certificate for the  pendulum with output feedback control.
 }
	\label{fig:pendulum_output}
	\vspace*{-0.4cm}
\end{figure}
\section{Introduction}
Deep learning has significantly advanced the development of neural-network-based controllers for robotic systems, especially those leveraging output feedback from images and sensor data \cite{kalashnikov2018qt, zhang2016learning}. Despite their impressive performance, many of these controllers lack the critical stability guarantees that are essential for safety-critical applications. Addressing this issue, Lyapunov stability \cite{lyapunov1992general} in control theory offers a robust framework to ensure the closed-loop stability of nonlinear dynamical systems. Central to this theory, a Lyapunov function is a scalar function whose value decreases along the system's closed-loop trajectories, guiding the system towards a stable equilibrium from any states within the region-of-attraction (ROA). While existing research has successfully synthesized  \emph{simple} (linear or polynomial) controllers \cite{tedrake2010lqr,  majumdar2013control, yang2023approximate, dai2023convex} with rigorous stability guarantees, certified by Lyapunov functions and their sublevel sets as region-of-attraction \cite{slotine1991applied} using linear quadratic regulator (LQR) or sum-of-squares (SOS) \cite{parrilo2000structured} based method, a gap persists in extending these guarantees to more complex neural-network controllers. To bridge this gap, we aim to synthesize neural-network controllers with Lyapunov functions, in order to certify the stability of the closed-loop system with either state or output feedback. 

Many recent works have considered searching for Lyapunov or barrier functions using sampled data to guide the synthesis of neural-network controllers \cite{dawson2022safe, jin2020neural, liu2023safe, sun2021learning}. Although empirically successful in a diverse range of tasks, they do not yet provide formal guarantees. In contrast, other research \cite{dai2021lyapunov, wu2023neural, everett2021reachability, vincent2022reachable} focuses on the rigorous certification, which is grounded in formal methods \cite{liu2021algorithms, edwards2023general}, with tools such as  Satisfiable Modulo Theories (SMT) \cite{chang2019neural, abate2020formal}, Mixed-integer Programming (MIP) \cite{dai2021lyapunov, chen2021learning} or Semidefinite Programming (SDP) \cite{wang2023joint, yin2021stability, fazlyab2020safety}. These formal methods formulate the Lyapunov certification problem as proving that certain functions (the Lyapunov function itself, together with the negation of its time derivative) are always non-negative over a domain. The state-of-the-art SMT solvers \cite{gao2013dreal,de2008z3} become limited by the complexity of the functions they can certify, especially when the controller, dynamics, sensor output function, observer, and the Lyapunov function intertwine. Consequently, the SMT-based approaches only synthesized simple controllers \cite{chang2019neural}. On the other hand, MIP solvers \cite{bertsimas1997introduction} employ a branch-and-bound process and divide the verification problem into linear subproblems. This approach has better scalability to higher dimensional systems with neural-network controllers \cite{dai2021lyapunov}, with the limitation of requiring the original system dynamics to be approximated as piecewise linear functions; hence, it cannot handle generic nonlinear dynamical systems. Due to these limitations in scalability, previous neural-network Lyapunov control works predominantly provided guarantees only for state-feedback control. Our work addresses the more challenging but practically relevant domain of output-feedback control, identifying and overcoming the limitations of existing methods to synthesize and certify controllers for real-world applications.

In addition to relying on resource-intensive solvers for SMT, MIP or SDP, prior works on neural certificates \cite{chang2019neural, dai2021lyapunov, wu2023neural} imposed the Lyapunov derivative constraint across an entire explicitly defined region, rather than the implicitly defined region-of-attraction. This results in unnecessarily restrictive conditions over uncertified regions. Moreover, all of them failed to find the largest certifiable ROA by applying incorrect restrictions on the ROA. We remedy these issues with a new formulation in Sec.~\ref{sec:verification_formulation} that eliminates the overly stringent constraints on the Lyapunov time-derivative over uncertifiable regions. 

To achieve the ambitious goal of synthesizing Lyapunov-stable neural control for general nonlinear dynamical systems with both state and output feedback, our work utilizes the latest progress from the neural network verification community. Recently, $\alpha,\!\beta$-CROWN \cite{zhang2018efficient, xu2020automatic,xu2020fast, wang2021beta, zhang2022general,shi2023formal} demonstrated great scalability in robustness verification of large-scale computer vision neural networks and safety verification of neural-network controllers~\citep{everett2023drip,mathiesen2022safety,rober2023backward,kotha2024provably}. This complete verifier has a few distinct features that are specialized for verifying NN-controlled systems. First, it \emph{exploits the network structure} of the underlying verification problem by efficiently propagating linear bounds through neural networks; in contrast, general-purpose MIP or SMT solvers do not effectively exploit the rich NN structure. Second, the bound propagation process is \emph{GPU-friendly}, allowing the efficient verification of large networks and the fast evaluation of many subproblems using branch-and-bound.
\looseness=-1


Our key contributions include:
 \begin{itemize}[topsep=0pt,parsep=0pt,partopsep=0pt,wide,labelwidth=!,labelindent=0pt]
     \item We synthesize and verify neural-network \emph{controllers}, \emph{observers} together with Lyapunov functions for \emph{general nonlinear} dynamical systems. To the best of our knowledge, this is the first work to achieve this goal with formal guarantees.
     \item We propose a novel formulation that defines a large certifiable region-of-attraction (see Fig. \ref{fig:pendulum_output}) and removes the unnecessarily restrictive Lyapunov time-derivative constraints in uncertified regions. Compared with previous works, our new formulation is easier to train and certify, while affording control over the growth of the ROA during training. 
     \item Unlike previous work with formal guarantees~\cite{dai2021lyapunov,chang2019neural}, which guided training with expensive verifiers like SMT or MIP,
     we show that cheap adversarial attacks with strategic regularization are sufficient to guide the learning process and achieve a \emph{certified} ROA via \emph{post-training} verification using a strong verifier.
 \end{itemize}
 
The paper is organized as follows. In Sec.\ref{section: formulation}, we discuss the problem formulation and our parameterization of the controllers/observers using NNs. In Sec.\ref{section: method}, we present our new formulation to verify Lyapunov stability and our new training algorithm to synthesize controllers. In Sec.\ref{section:results}, we demonstrate that our novel formulation leads to larger ROAs compared to the state-of-the-art approaches in multiple dynamical systems. For the first time in literature, we present verified neural network controllers and observers for pendulum and 2D quadrotor with \emph{output feedback} control.%

\vspace{-0.5em}
\section{Problem Statement}
\label{section: formulation}
\vspace{-0.3em}
We consider a nonlinear discrete-time plant
\vspace{-0.5em}
\begin{subequations}
\begin{align}
x_{t+1} = f(x_t, u_t)\\
y_t = h(x_t)
\end{align}
\end{subequations}
where $x_t\in\mathbb{R}^{n_x}$ is the state, $u_t\in \{u|u_{\text{lo}} \le u \le u_{\text{up}}\}\subset \mathbb{R}^{n_u}$ is the control input, and $y_t \in\mathbb{R}^{n_y}$ is the system output. We denote the goal state/control at equilibrium as $x^*/u^*$ and assume that $f$ is continuous.

Our objective is to jointly search for a Lyapunov function and a neural-network control policy (together with a neural-network state observer for output feedback scenarios) to formally certify the Lyapunov stability of the closed-loop system. Moreover, we aim to train the policy that maximizes the region-of-attraction (ROA) for the closed-loop system and certify its inner-approximation. We will first introduce our parameterization of the policy and the state observer, and then specify our training and verification goal.

\textbf{State feedback control.\enskip}
In this scenario, the controller has full access to the accurate state $x_t$. We parameterize the control policy with a neural network $\phi_\pi:\mathbb{R}^{n_x}\rightarrow\mathbb{R}^{n_u}$ as
\begin{equation}
	u_t = \pi(x_t) = \text{clamp}\left(\phi_\pi(x_t) - \phi_\pi(x^*) + u^*, u_{\text{lo}}, u_{\text{up}}\right). \label{eq:control_network}
\end{equation}
By construction, this control policy $\pi(\bullet)$ produces the goal control $u^*$ at the goal state $x^*$.

\textbf{Output feedback control.\enskip}
In the output feedback setting, the controller does not have access to the true state $x_t$ but rather only observes the output $y_t$. The output can be either a subset of the state or, more generally, a nonlinear function of $x_t$. In this paper, we consider the situation where there are only uncertainties in the initial conditions. We aim to estimate the state as $\hat x_t$ with a dynamic state observer using a neural network $\phi_{\text{obs}}:\mathbb{R}^{n_x}\times\mathbb{R}^{n_y}\rightarrow\mathbb{R}^{n_x}$ as \looseness=-1
\begin{equation}
\hat x_{t+1} = f(\hat x_t, u_t) + \phi_{\text{obs}}(\hat x_t, y_t-h(\hat x_t))-\phi_{\text{obs}}(\hat x_t, \mathbf{0}_{n_y}),
\end{equation}
where $\mathbf{0}_{n_y}\in\mathbb{R}^{n_y}$ is a vector of all 0s. Notice that this state observer resembles the Luenberger observer \cite{luenberger1971introduction}, where the observer gain is replaced by the neural network $\phi_{\text{obs}}$. By construction, if $\hat{x}_{t} = x_{t}$, then our observer ensures that $\hat{x}_{t+1} = x_{t+1}$. The network $\phi_\pi:\mathbb{R}^{n_x} \times \mathbb{R}^{n_y}\rightarrow\mathbb{R}^{n_u}$ now takes in both the state estimate $\hat x_t$ and output $y_t$ rather than the true state $x_t$
\begin{align}
\vspace*{-0.2cm}
    u_t &= \pi(\hat x_{t}, y_t) \nonumber \\ &= \text{clamp}\left(\phi_\pi(\hat x_t, y_t) - \phi_\pi(x^*, h(x^*)) + u^*, u_{\text{lo}}, u_{\text{up}}\right).
    \vspace*{-0.2cm}
\end{align}
Unlike linear dynamical systems whose optimal output feedback controller only depends on the estimated state $\hat x_t$ (i.e., the separation principle \cite{aastrom2012introduction, athans1971role}), we expand the design of our neural-network controller to depend on both $\hat x_t$ and $y$ for the nonlinear dynamical systems. 
By also incorporating the output $y_t$, we enable the controller to distinguish and appropriately react to different actual states $x_t$ that may correspond to the same state estimate. We find this controller design to be sufficient for all our experiments.

\textbf{Unifying state and output feedback notation.}
To unify the design for both state and output feedback control and simplify the notation, we introduce an internal state $\xi_t \in\mathbb{R}^{n_\xi}$ with the closed-loop dynamics
\begin{equation}
    \xi_{t+1} = f_{\text{cl}}(\xi_t). \label{eq:internal_cl_dynamics}
\end{equation}
For state feedback, the internal state is simply the true state $\xi_t = x_t$ and the closed-loop dynamics is
\begin{align}
    f_{\text{cl}}(\xi_t) = f(\xi_t, \pi(\xi_t)). 
\end{align}
For output feedback, we define the state prediction error $e_t = \hat{x}_t - x_t$,
whose value at the equilibrium is required to be $e^* \equiv \mathbf{0}_{n_x}$. The internal state is defined as $\xi_t = \left [x_t, \ e_t \right ]^\top$
with closed-loop dynamics
\begin{subequations}
\begin{align}
     &f_{\text{cl}}(\xi_t) = \begin{bmatrix} f(x_t, \pi(\hat x_t, h(x_t))) \\ f(\hat x_t, \pi(\hat x_t, h(x_t))) + g(x_t, \hat x_t) - x_t\end{bmatrix}\\
     &g(x_t, \hat x_t) = \phi_{\text{obs}}(\hat x_t, h(x_t)-h(\hat x_t))-\phi_{\text{obs}}(\hat x_t, \mathbf{0}_{n_y}).
\end{align}
\end{subequations}
\begin{definition}[region-of-attraction] The region-of-attraction for an equilibrium state $\xi^*$ is the largest invariant set $\mathcal{R}$ such that under the closed-loop dynamics \eqref{eq:internal_cl_dynamics}, $\lim_{t\rightarrow \infty}\xi_t =\xi^*$ for all $\xi_0 \in \mathcal{R}$.    
\end{definition}
\textbf{Training and verification goal.} 
Formally, we aim at finding a Lyapunov function $V(\xi_t):\mathbb{R}^{n_\xi}\rightarrow\mathbb{R}$, and an invariant and bounded set $\mathcal{S}$ that contains the goal state at the equilibrium $\xi^*$ as a certified inner-approximation of the ROA $\mathcal{R}$. Our objective is formalized in the optimization problem
\begin{subequations}
\begin{align}
    \max_{V, \pi, \phi_{\text{obs}}} \; &\text{Vol}(\mathcal{S})\label{eq:roa_volume_objective}\\
    \text{s.t } &V(\xi_t) > 0 \; \forall \xi_t\ne \xi^* \in\mathcal{S}\label{eq:lyapunov_positivity}\\
	&V(\xi_{t+1}) - V(\xi_t) \le -\kappa V(\xi_t) \; \forall \xi_t\in\mathcal{S}\label{eq:lyapunov_derivative}\\
	&V(\xi^*) = 0,\label{eq:lyapunov_equilibrium}
\end{align}\label{eq:lyapunov}%
\end{subequations}
where $\kappa > 0$ is a fixed constant for exponential stability convergence rate. Constraints \eqref{eq:lyapunov_positivity}-\eqref{eq:lyapunov_equilibrium} guarantee that trajectories originating from any state within $\mathcal{S}$ will eventually converge to the goal state $\xi^*$. Hence, $\mathcal{S}$ is an inner-approximation of the ROA. Our subsequent efforts will focus on expanding this set $\mathcal{S}$ for broader stability guarantees. \looseness=-1


\section{Methodology}
\label{section: method}
Previous works on verified neural certificates \cite{chang2019neural, dai2021lyapunov, wu2023neural} enforced overly restrictive Lyapunov derivative constraints in an entire explicitly defined region, and failed to find the largest verifiable ROA. In this section, we present our new formulation that defines a larger certifiable ROA and removes these constraints outside the ROA. 
We then discuss our verification and training algorithms to generate stabilizing controllers and observers together with Lyapunov certificates.
\subsection{Design of learnable Lyapunov functions}
\begin{figure}
    \centering
    \includegraphics[width=0.38\textwidth]{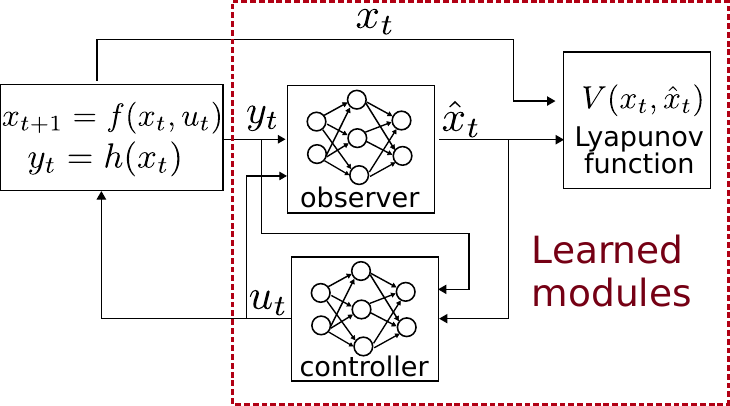}
    \caption{Given a dynamical system, we find an observer, a controller, and a Lyapunov function (parametrized by functions such as NNs), to prove the stability of the closed-loop system with a large certified region-of-attraction.\looseness=-1}
    \label{fig:feedback_diagram_observer}
    \vspace*{-0.3cm}
\end{figure}
To enforce the positive definite condition \eqref{eq:lyapunov_positivity}, we adopt two types of parameterizations for the Lyapunov function. We construct the Lyapunov function using either 1) a neural network $\phi_V:\mathbb{R}^{n_\xi}\rightarrow\mathbb{R}$ as
\begin{equation}
	V(\xi_t) = |\phi_V(\xi_t) - \phi_V(\xi^*)| + \|(\epsilon I + R^TR)(\xi_t - \xi^*)\|_1, \label{eq:nn_lyapunov}
\end{equation}
or 2) a quadratic function
\begin{align}
    V(\xi_t) = (\xi_t - \xi^*)^T(\epsilon I + R^TR)(\xi_t - \xi^*),
    \label{eq:quadratic_lyapunov}
\end{align}
where $\epsilon$ is a small positive scalar and $R \in \mathbb{R}^{n_\xi \times n_\xi}$ is a learnable matrix parameter to be optimized. Notice that since $\epsilon I + R^TR$ is a  strictly positive definite matrix, the term $|(\epsilon I + R^TR)(\xi_t - \xi^*)|_1$ or $(\xi_t - \xi^*)^T(\epsilon I + R^TR)(\xi_t - \xi^*)$ guarantees the Lyapunov candidate to be strictly positive when $\xi_t \neq \xi^*$. Also, by construction, the Lyapunov candidates \eqref{eq:nn_lyapunov} and \eqref{eq:quadratic_lyapunov} satisfy $V(\xi^*) = 0$ (condition \eqref{eq:lyapunov_equilibrium}). We illustrate our entire system diagram in Fig. \ref{fig:feedback_diagram_observer}.

\subsection{A Novel Verification Formulation}
\label{sec:verification_formulation}
Our verifier $\alpha,\!\beta$-CROWN, along with others like  dReal, can verify statements such as $V(\xi_{t+1}) - V(\xi_t)\le -\kappa V(\xi_t)$, over an \textit{explicitly} defined region. 
Therefore, we choose a compact ``region-of-interest" $\mathcal{B}$ (e.g., a box) containing the equilibrium state $\xi^*$, and constrain $\mathcal{S}$ as the intersection of $\mathcal{B}$ and a sublevel set of $V$ as
\begin{equation} \label{eq:S_def}
     \mathcal{S} \coloneqq \{\xi_t \in \mathcal{B}| V(\xi_t) < \rho\},
\end{equation}
where $\rho$ ensures
\begin{equation}\label{eq:rho_cond}
    \xi_{t+1} \in \mathcal{B} \quad \forall \xi_t \in \mathcal{S}.
\end{equation}
\begin{proposition}\label{prop:S_invariance}
If $\xi_t \in \mathcal{S}$, then $\xi_{t+1} \in \mathcal{S}$. Moreover, $\mathcal{S} \subseteq \mathcal{R}$.
\end{proposition}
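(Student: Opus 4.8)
The plan is to prove the statement in two stages: first establish forward invariance of $\mathcal{S}$, and then upgrade invariance to the stronger claim $\mathcal{S}\subseteq\mathcal{R}$ by showing that every trajectory launched in $\mathcal{S}$ converges to $\xi^*$. The only ingredients I need are the three Lyapunov conditions \eqref{eq:lyapunov_positivity}--\eqref{eq:lyapunov_equilibrium}, the containment guarantee \eqref{eq:rho_cond} built into the choice of $\rho$, and the positive-definite structure of $V$ coming from \eqref{eq:nn_lyapunov} or \eqref{eq:quadratic_lyapunov}.

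For invariance, I would fix $\xi_t\in\mathcal{S}$, so that $\xi_t\in\mathcal{B}$ and $V(\xi_t)<\rho$. I first observe $\xi_{t+1}\in\mathcal{B}$, which is exactly what \eqref{eq:rho_cond} guarantees. It then remains to check the sublevel condition $V(\xi_{t+1})<\rho$. Rearranging the derivative constraint \eqref{eq:lyapunov_derivative} gives $V(\xi_{t+1})\le(1-\kappa)V(\xi_t)$; since $\kappa>0$ and $V(\xi_t)\ge 0$ by positivity, this yields $V(\xi_{t+1})\le V(\xi_t)<\rho$. Combining the two facts, $\xi_{t+1}\in\mathcal{S}$.

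For the containment $\mathcal{S}\subseteq\mathcal{R}$, I would take any $\xi_0\in\mathcal{S}$. Forward invariance and induction keep the entire trajectory $\{\xi_t\}$ in $\mathcal{S}$, so the bound $V(\xi_{t+1})\le(1-\kappa)V(\xi_t)$ holds at every step and telescopes to $V(\xi_t)\le(1-\kappa)^t V(\xi_0)$. In the exponential-stability regime $0<\kappa<1$ attached to $\kappa$ following \eqref{eq:lyapunov}, this forces $V(\xi_t)\to 0$. The final step converts $V(\xi_t)\to 0$ into $\xi_t\to\xi^*$: both parameterizations \eqref{eq:nn_lyapunov} and \eqref{eq:quadratic_lyapunov} contain the term built from the strictly positive definite matrix $\epsilon I+R^TR$, which furnishes a coercive lower bound of the form $V(\xi)\ge c\,\|\xi-\xi^*\|$ (or $c\,\|\xi-\xi^*\|^2$) with $c>0$, so $V(\xi_t)\to 0$ implies $\xi_t\to\xi^*$. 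Since $\mathcal{S}$ is thus an invariant set from which all trajectories converge to $\xi^*$, and $\mathcal{R}$ is by definition the largest such set, I conclude $\mathcal{S}\subseteq\mathcal{R}$.

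The main obstacle is this last conversion step, since positivity of $V$ alone (condition \eqref{eq:lyapunov_positivity}) does not a priori prevent the trajectory from drifting while $V$ decreases. I would close this gap by leaning on the explicit coercive lower bound supplied by the $\epsilon I+R^TR$ term in the parameterization; alternatively, because $\mathcal{S}\subseteq\mathcal{B}$ is bounded and $V$ is continuous, a compactness argument works: for each $\delta>0$ the continuous function $V$ attains a positive minimum $m_\delta$ on the compact set $\{\xi\in\overline{\mathcal{S}}:\|\xi-\xi^*\|\ge\delta\}$, and $V(\xi_t)\to 0<m_\delta$ then forces $\xi_t$ into the $\delta$-ball eventually. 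A minor point to note is that the geometric decay argument requires $1-\kappa\in[0,1)$; for $\kappa\ge 1$ the same inequality together with $V\ge 0$ collapses $V$ to $0$ immediately, so convergence is only easier.
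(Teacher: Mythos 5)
Your proof of forward invariance is exactly the paper's argument: rearrange \eqref{eq:lyapunov_derivative} into $V(\xi_{t+1}) \le (1-\kappa)V(\xi_t) \le V(\xi_t) < \rho$, and invoke \eqref{eq:rho_cond} to get $\xi_{t+1}\in\mathcal{B}$, hence $\xi_{t+1}\in\mathcal{S}$. Where you differ is that the paper's proof stops there: it establishes invariance and leaves the containment $\mathcal{S}\subseteq\mathcal{R}$ essentially implicit (a convergence argument appears only in sketch form in the proof of Theorem~\ref{theorem:verification}, as ``$V$ decreases exponentially within this invariant set''). You supply that missing half explicitly: induction keeps the trajectory in $\mathcal{S}$, telescoping gives $V(\xi_t)\le(1-\kappa)^t V(\xi_0)\to 0$, and then $V(\xi_t)\to 0$ is upgraded to $\xi_t\to\xi^*$. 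You are also right that this last step is the one place where condition \eqref{eq:lyapunov_positivity} alone does not suffice, and either of your two fixes closes the gap: the coercive bound $V(\xi)\ge\|(\epsilon I+R^TR)(\xi-\xi^*)\|_1\ge\epsilon\|\xi-\xi^*\|_1$ (resp.\ $\epsilon\|\xi-\xi^*\|_2^2$) built into \eqref{eq:nn_lyapunov} and \eqref{eq:quadratic_lyapunov} via $\epsilon I+R^TR\succeq\epsilon I$, or the compactness argument on $\overline{\mathcal{S}}\subseteq\mathcal{B}$ (which requires $V>0$ on $\overline{\mathcal{S}}\setminus\{\xi^*\}$ --- again supplied by the parameterization rather than by \eqref{eq:lyapunov_positivity}, which is only stated on $\mathcal{S}$). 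Your side remark on $\kappa\ge 1$ is also correct. In short: identical route for the invariance claim, and a more complete and careful treatment of the $\mathcal{S}\subseteq\mathcal{R}$ claim than the paper itself provides.
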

\begin{proof}
    For any $\xi_t$ in $\mathcal{S}$, we have that $V(\xi_{t+1}) - V(\xi_t) \le -\kappa V(\xi_t) \le 0$ by \eqref{eq:lyapunov_derivative} and therefore $V(\xi_{t+1}) \le V(\xi_t) < \rho$. By \eqref{eq:rho_cond}, we also know that $\xi_{t+1} \in \mathcal{B}$. Therefore, we have that $\xi_{t+1} \in \mathcal{S}$ and $\mathcal{S}$ is invariant.
\end{proof}

\begin{figure}[t]
     \centering
     \includegraphics[width=0.3\textwidth]{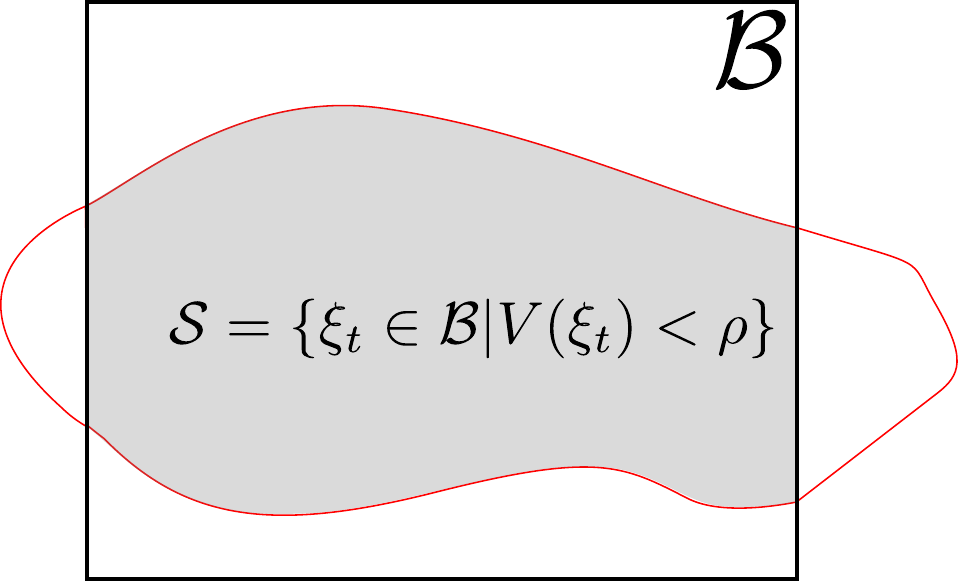}
     \caption{We choose a ``region-of-interest" $\mathcal{B}$. The invariant set $\mathcal{S}$ (the shaded region) is the intersection of the sub-level set $\{\xi_t | V(\xi_t)<\rho\}$ (the red curve) and the region $\mathcal{B}$. $\rho$ is chosen such that $\xi_{t+1}\in \mathcal{B}\; \forall \xi_t\in\mathcal{S}$.}
     \label{fig:box_roa}
\end{figure}

Fig. \ref{fig:box_roa} illustrates the region of interest $\mathcal{B}$, and the invariant set $\mathcal{S}$. Taking $\mathcal{S}$ to be the intersection of the sublevel set and $\mathcal{B}$ is particularly important here, since smaller $\mathcal{B}$ with relatively large $\mathcal{S}$ reduces the burden on our $\alpha,\!\beta$-CROWN verifier compared to larger $\mathcal{B}$ with relatively small $\mathcal{S}$.

\textbf{Drawbacks of existing approaches.} Many complete verifiers, including $\alpha,\!\beta$-CROWN and MIP solvers, do not directly handle verification over an \textit{implicitly} defined region, such as our invariant set $\mathcal{S}$ that depends on $V$. To get around this limitation and enforce the derivative condition \eqref{eq:lyapunov_derivative}, previous works \cite{chang2019neural, dai2021lyapunov, wu2023neural} typically adopt a two-step approach to obtain the region-of-attraction. In step 1, they synthesize and verify the Lyapunov derivative condition over the entire explicitly defined domain
\begin{equation}
    V(\xi_{t+1}) - V(\xi_t) \le -\kappa V(\xi_t) \; \ \forall \xi_t\in\mathcal{B}\label{eq:lyapunov_derivative_box}.
\end{equation}
In step 2, they compute the largest sublevel set \emph{within} $\mathcal{B}$, denoted by $\Tilde{\mathcal{S}} \coloneqq \{\xi_t \in \mathcal{B}|V(\xi_t) < \min_{\bar{\xi}_t\in\partial \mathcal{B}}V(\bar{\xi}_t)\}$, as the certified ROA. The drawback of this two-step approach is twofold: 1) the Lyapunov derivative condition is unnecessarily certified over $\mathcal{S}^c \cap \mathcal{B}$. This region, represented as the unshaded region \textit{outside} of $\mathcal{S}$ and within $\mathcal{B}$ in Fig. \ref{fig:box_roa}, is not invariant. This means states initiating from this region have no guarantees on stability or convergence. 2) Their certified ROA $\Tilde{\mathcal{S}}$ is not guaranteed to be invariant and is much smaller than the largest possible $\mathcal{S}$ by construction. Consequently, this two-step approach makes the synthesis and verification unnecessarily hard, significantly reducing the size of the certified ROA.
\begin{example}
    Consider a 2-dimensional single integrator plant $x_{t+1} = x_t  + 0.1 \cdot u_t$ with the controller $\pi(x_t)=-x_t$ and Lyapunov function $V(x_t) = x_t^T x_t$. Let $\mathcal{B} = [-1, 1]^2$ and $\kappa = 0.1$, the Lyapunov derivative condition is satisfied on the entire set $\mathcal{B}$. Moreover, since the closed-loop dynamics lead to $x_{t+1} = (1-0.1)\cdot x_t \in \mathcal{B}, \forall x_t \in \mathcal{B}$, we have that $\mathcal{S} = \mathcal{B}$.
    However, previous works can only find the largest sublevel set as the circle $\Tilde{\mathcal{S}} = \{x_t|x_t^Tx_t \le 1\}$, which is strictly contained in $\mathcal{B}$.
\end{example}
\textbf{A new formulation for verifying ROA.} To overcome the limitations of existing approaches, we describe how to reformulate the derivative condition \eqref{eq:lyapunov_derivative}, originally defined over $\mathcal{S}$, to be verified over the explicitly defined region $\mathcal{B}$. 
\begin{theorem}\label{theorem:verification} Let $F(\xi_t) := V(f_{\text{cl}}(\xi_t)) - (1-\kappa)V(\xi_t)$. If the condition 
\begin{equation}
\label{eq:verification}
(-F(\xi_t) \ge 0  \land  \xi_{t+1}\in \mathcal{B}) \ \lor \ (V(\xi_t) \ge \rho), \ \forall \xi_t\in\mathcal{B}
\end{equation}
holds, then the closed-loop system \eqref{eq:internal_cl_dynamics} is Lyapunov stable with $\mathcal{S}$ as the certified ROA.
\end{theorem}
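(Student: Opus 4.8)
The plan is to show that the disjunctive condition \eqref{eq:verification}, although quantified over all of $\mathcal{B}$, collapses onto exactly the two hypotheses of Proposition~\ref{prop:S_invariance} once we restrict attention to $\mathcal{S}$, and then to upgrade invariance to convergence by a standard compactness argument. First I would rewrite $F$ in terms of the one-step Lyapunov difference: since $\xi_{t+1} = f_{\text{cl}}(\xi_t)$ by \eqref{eq:internal_cl_dynamics}, we have $F(\xi_t) = V(\xi_{t+1}) - (1-\kappa)V(\xi_t)$, so the clause $-F(\xi_t) \ge 0$ is literally the derivative condition \eqref{eq:lyapunov_derivative}, i.e.\ $V(\xi_{t+1}) - V(\xi_t) \le -\kappa V(\xi_t)$. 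The key structural observation is that for $\xi_t$ with $V(\xi_t) \ge \rho$ (in particular everything outside $\mathcal{S}$) the second disjunct holds for free, so \eqref{eq:verification} imposes no Lyapunov constraint there — which is precisely the point of the new formulation.

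Next I would localize to $\mathcal{S}$. Fix any $\xi_t \in \mathcal{S}$; by \eqref{eq:S_def} we have $\xi_t \in \mathcal{B}$ and $V(\xi_t) < \rho$, so the second disjunct $V(\xi_t) \ge \rho$ is false. Because \eqref{eq:verification} holds for this $\xi_t \in \mathcal{B}$, the first disjunct must hold, giving both $V(\xi_{t+1}) - V(\xi_t) \le -\kappa V(\xi_t)$ and $\xi_{t+1} \in \mathcal{B}$. These are exactly conditions \eqref{eq:lyapunov_derivative} and \eqref{eq:rho_cond} restricted to $\mathcal{S}$, so Proposition~\ref{prop:S_invariance} applies and yields that $\mathcal{S}$ is invariant. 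Concretely, $V(\xi_{t+1}) \le (1-\kappa)V(\xi_t) \le V(\xi_t) < \rho$ together with $\xi_{t+1} \in \mathcal{B}$ gives $\xi_{t+1} \in \mathcal{S}$.

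With invariance in hand I would run the trajectory: for $\xi_0 \in \mathcal{S}$, invariance keeps every $\xi_t \in \mathcal{S}$, so the derivative inequality applies at every step and iterating gives $V(\xi_t) \le (1-\kappa)^t V(\xi_0)$. In the exponential-stability regime $\kappa \in (0,1)$ this forces $V(\xi_t) \to 0$ as $t \to \infty$. The final step, which I expect to be the only delicate one, is to pass from $V(\xi_t) \to 0$ to $\xi_t \to \xi^*$, since $\mathcal{S}$ itself is not closed. Here I use that $\mathcal{B}$ is compact (so the trajectory is bounded), that $V$ is continuous (true for both the quadratic form \eqref{eq:quadratic_lyapunov} and the network form \eqref{eq:nn_lyapunov}), and that $V$ is positive definite with $V(\xi^*)=0$. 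If $\xi_t \not\to \xi^*$, some subsequence stays a distance $\delta > 0$ from $\xi^*$; by compactness it has a further subsequence converging to some $\bar\xi \in \mathcal{B}$ with $\bar\xi \ne \xi^*$, whence $V(\bar\xi) > 0$, contradicting $V(\xi_t) \to 0$ and continuity. Hence $\xi_t \to \xi^*$ for every $\xi_0 \in \mathcal{S}$, so $\mathcal{S}$ is invariant and attracted to $\xi^*$, i.e.\ $\mathcal{S} \subseteq \mathcal{R}$, establishing the claim.
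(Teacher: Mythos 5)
Your proof is correct and follows essentially the same route as the paper's: restrict the disjunction to $\mathcal{S}$, where $V(\xi_t) < \rho$ kills the second disjunct, recover the derivative condition together with $\xi_{t+1}\in\mathcal{B}$, and conclude invariance of $\mathcal{S}$ plus exponential decrease of $V$. You are in fact more careful than the paper on the last step (passing from $V(\xi_t)\to 0$ to $\xi_t\to\xi^*$ via compactness of $\mathcal{B}$ and continuity and positive definiteness of $V$), which the paper leaves implicit.
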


Namely a point $\xi_t\in\mathcal{B}$ either satisfies the Lyapunov function value decreasing and will stay in $\mathcal{B}$ at the next time step, or is outside of the certified ROA $\mathcal{S}$.
Adding the condition $V(\xi_t) \ge \rho$ makes verification \emph{easier}
, as the verifier only needs to check the Lyapunov derivative condition when $\xi_t$ is within the sublevel set $V(\xi_t) < \rho $.



\textbf{Verification with $\alpha,\!\beta$-CROWN.} The verification problem~\eqref{eq:verification} is presented as a general computation graph to $\alpha,\!\beta$-CROWN, and we extended the verifier to support all nonlinear operations in our system, such as trigonometric functions. Initially, $\alpha,\!\beta$-CROWN uses an efficient bound propagation method~\cite{zhang2018efficient} to lower bound $-F(\xi_t)$ and $V(\xi_t) - \rho$ on $\mathcal{B}$; if one of the lower bound is nonnegative, \eqref{eq:verification} is verified. Otherwise, $\alpha,\!\beta$-CROWN conducts branch-and-bound: it splits $\mathcal{B}$ into smaller regions by cutting each dimension of $\mathcal{B}$ and solving verification subproblems in each subspace. The lower bounds tend to be tighter after branching, and \eqref{eq:verification} is verified when all subproblems are verified. We modified the branching heuristic on $\mathcal{B}$ to encourage branching at $\xi^*$, since $F(\xi_t)$ tends to be 0 around $\xi^*$, and tighter lower bounds are required to prove its positiveness. Compared to existing verifiers for neural Lyapunov certificates \cite{chang2019neural, dai2021lyapunov}, the efficient and GPU-friendly bound propagation procedure in $\alpha,\!\beta$-CROWN that exploits the structure of the verification problem is the key enabler to solving the difficult problem presented in~\eqref{eq:verification}. We can use bisection to find the largest 
sublevel set value $\rho_{\text{max}}$ that satisfies \eqref{eq:verification}. Our verification algorithm is outlined in \ref{algorithm:verification}. \looseness=-1

\begin{algorithm}
	\caption{Lyapunov-stable Neural Control Verification}
	\label{algorithm:verification}
	\begin{algorithmic}[1]
		\STATE \textbf{Input:} neural-network controller $\pi$, observer network $\phi_{\text{obs}}$, Lyapunov function $V$, sublevel set value estimate $\hat{\rho}_{\text{max}}$ (possibly from training), scaling factor $\lambda$, convergence tolerance $tol$
  \STATE \textbf{Output:} certified sublevel set value $\rho_{\text{max}}$
  \STATE \textit{// Find initial bounds $\rho_{\text{lo}}$ and $\rho_{\text{up}}$ for bisection}
  \STATE Verify \eqref{eq:verification} with ($\pi, \phi_{\text{obs}}, V, \hat{\rho}_{\text{max}}$) via $\alpha,\!\beta$-CROWN
  \IF{verified}
    \STATE $\rho_{\text{lo}} = \hat{\rho}_{\text{max}}$
    \STATE $\rho_{\text{up}} =$ multiply $\hat{\rho}_{\text{max}}$ by $\lambda$ until verification fails
  \ELSE
    \STATE $\rho_{\text{up}} = \hat{\rho}_{\text{max}}$
    \STATE $\rho_{\text{lo}} =$ divide $\hat{\rho}_{\text{max}}$ by $\lambda$ until verification succeeds
  \ENDIF
  \STATE \textit{// Bisection to find $\rho_{\text{max}}$}
  \WHILE{$\rho_{\text{up}}-\rho_{\text{lo}}>tol$}
  \STATE $\rho_{\text{max}} \gets \frac{\rho_{\text{lo}}+\rho_{\text{up}}}{2}$
  \STATE Verify \eqref{eq:verification} with ($\pi, \phi_{\text{obs}}, V, \rho_{\text{max}}$) via $\alpha,\!\beta$-CROWN \looseness=-1
  \IF{verified}
  \STATE $\rho_{\text{lo}} \gets \rho_{\text{max}}$
  \ELSE
  \STATE $\rho_{\text{up}} \gets \rho_{\text{max}}$
  \ENDIF
  \ENDWHILE
	\end{algorithmic}
\end{algorithm}
\setlength{\textfloatsep}{10pt}

\subsection{Training Formulation}
We adopt a new \emph{single-step} approach that can directly synthesize and verify the ROA. We define $H(\xi_{t+1})$ as the violation of $\xi_{t+1}$ staying within $\mathcal{B}$, which is positive for $\xi_{t+1}\notin \mathcal{B}$ and $0$ otherwise. Mathematically, for an axis-aligned bounding box $\mathcal{B} = \{\xi|\xi_{\text{lo}}\le \xi \le \xi_{\text{up}}\}$,
\begin{equation}
    H(\xi_{t+1}) = \|\text{ReLU}(\xi_{t+1} - \xi_{\text{up}})\|_1 + \|\text{ReLU}(\xi_{\text{lo}} - \xi_{t+1})\|_1.
\end{equation}


\begin{theorem}\label{theorem:training}
        The following conditions are necessary and sufficient for each other:
\begin{subequations}
\begin{equation} 
    (F(\xi_t) \le 0)  \land (H(\xi_{t+1}) \le 0)  \;\forall \xi_t\in\mathcal{S}\label{eq:lyapunov_derivative_roa_piecewise}  \Leftrightarrow
\end{equation}
\vspace{-0.5cm}
\begin{equation}
     \min (\text{ReLU}\left(F(\xi_t)\right) + c_0H(\xi_{t+1}), \ \rho-V(\xi_t)) \le 0 \;\forall \xi_t\in\mathcal{B}\label{eq:lyapunov_derivative_roa_piecewise}.
\end{equation}
\normalsize
\end{subequations}
\end{theorem}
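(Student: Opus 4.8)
The plan is to prove the stated equivalence as a chain of pointwise logical equivalences, decomposing the inner $\min$ on the right and exploiting the non-negativity of its summands. The key preliminary observation I would record first is that $\text{ReLU}(F(\xi_t)) \ge 0$, that $H(\xi_{t+1}) \ge 0$ by construction (it is a sum of $\ell_1$ norms of $\text{ReLU}$ terms), and that $c_0 > 0$. Consequently the quantity $\text{ReLU}(F(\xi_t)) + c_0 H(\xi_{t+1})$ is always non-negative, so it is $\le 0$ if and only if it equals $0$, which happens exactly when $\text{ReLU}(F(\xi_t)) = 0$ and $H(\xi_{t+1}) = 0$, i.e.\ when $F(\xi_t) \le 0$ and $H(\xi_{t+1}) \le 0$. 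This collapses the first branch of the $\min$ into precisely the conjunction appearing in the left-hand condition.

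Next I would rewrite the right-hand statement pointwise. For a fixed $\xi_t \in \mathcal{B}$, the inequality $\min(a,b) \le 0$ is equivalent to $(a \le 0) \lor (b \le 0)$. Taking $a = \text{ReLU}(F(\xi_t)) + c_0 H(\xi_{t+1})$ and $b = \rho - V(\xi_t)$, and applying the preliminary observation, the right-hand condition becomes: for every $\xi_t \in \mathcal{B}$, either $(F(\xi_t) \le 0 \land H(\xi_{t+1}) \le 0)$ or $V(\xi_t) \ge \rho$. I would then invoke \eqref{eq:S_def}, recalling $\mathcal{S} = \{\xi_t \in \mathcal{B} \mid V(\xi_t) < \rho\}$, so that for $\xi_t \in \mathcal{B}$ the disjunct $V(\xi_t) \ge \rho$ is exactly the statement $\xi_t \notin \mathcal{S}$.

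With this reformulation both directions are short. For the ($\Leftarrow$) direction I take any $\xi_t \in \mathcal{S}$; then $V(\xi_t) < \rho$, so the disjunct $V(\xi_t) \ge \rho$ fails, forcing $F(\xi_t) \le 0 \land H(\xi_{t+1}) \le 0$, which is the left-hand condition. For the ($\Rightarrow$) direction I take any $\xi_t \in \mathcal{B}$ and split on membership in $\mathcal{S}$: if $\xi_t \in \mathcal{S}$, the left-hand hypothesis directly gives $F(\xi_t) \le 0 \land H(\xi_{t+1}) \le 0$, satisfying the first disjunct; if $\xi_t \notin \mathcal{S}$, then $V(\xi_t) \ge \rho$, satisfying the second disjunct. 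In either case the $\min$ is $\le 0$.

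Since the argument is a sequence of equivalences rather than a computation, I do not expect a genuine technical obstacle. The one point demanding care is the precise role of the hypotheses $c_0 > 0$ and $H \ge 0$ (together with $\text{ReLU} \ge 0$): these are exactly what allow the step ``$a \le 0 \iff a = 0 \iff (F \le 0 \land H \le 0)$'', and without them the first branch would only yield a weaker condition. I would therefore state explicitly that $H$ is non-negative by construction and that $c_0$ is a fixed positive weight, so that the collapse of the sum is fully justified in both directions.
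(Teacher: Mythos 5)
Your proposal is correct and follows essentially the same route as the paper's proof: decompose the $\min$ into a disjunction, use the non-negativity of $\text{ReLU}(F(\xi_t))$ and $H(\xi_{t+1})$ together with $c_0>0$ to split the first branch into the conjunction $F(\xi_t)\le 0 \land H(\xi_{t+1})\le 0$, and then absorb the disjunct $V(\xi_t)\ge\rho$ into the quantifier via the definition of $\mathcal{S}$ in \eqref{eq:S_def}. The only difference is presentational: the paper writes this as a single chain of equivalences, while you spell out the two implications at the end.
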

Here, $c_0>0$ balances the violations of Lyapunov derivative condition and set invariance during training. The condition $H(\xi_{t+1})\le 0$ ensures $\mathcal{S}$ is invariant. Now we can synthesize the Lyapunov function and the controller satisfying condition \eqref{eq:lyapunov_derivative_roa_piecewise} over the explicitly defined domain $\mathcal{B}$.\footnote{To enforce \eqref{eq:lyapunov_derivative} in polynomial optimization, the S-procedure \cite{polik2007survey} from control theory employs finite-degree Lagrange multipliers to decide whether a  polynomial inequality $V(\xi_{t+1}) - V(\xi_t) \le -\kappa V(\xi_t)$ is satisfied over an invariant semi-algebraic set $\{\xi_t | V(\xi_t)< \rho\}$. In contrast, we can directly enforce \eqref{eq:lyapunov_derivative_roa_piecewise} thanks to the flexibility of $\alpha,\!\beta$-CROWN.} We define the violation on  \eqref{eq:lyapunov_derivative_roa_piecewise} as
\begin{equation}
\small
    L_{\dot{V}}(\xi_t; \rho) = \text{ReLU}(\min (\text{ReLU} (F(\xi_t)) + c_0 H(\xi_{t+1}), 
    \rho-V(\xi_t))).
    \label{eq:lyapunov_derivative_loss}
\end{equation}

The objective function \eqref{eq:roa_volume_objective} aims at maximizing the volume of $\mathcal{S}$. Unfortunately, the volume of this set cannot be computed in closed form. Hence, we seek a surrogate function that, when optimized, indirectly expands the volume of $\mathcal{S}$. Specifically, we select some candidate states $\xi^{(i)}_{\text{candidate}}, i=1,\hdots, n_{\text{candidate}}$ that we wish to stabilize with our controller. The controller and Lyapunov function are optimized to cover $\xi^{(i)}_{\text{candidate}}$ with $\mathcal{S}$, i.e., $V(\xi^{(i)}_{\text{candidate}}) < \rho$. Formally we choose to minimize this surrogate function
\begin{align}
L_{\text{roa}}(\rho) =\sum_{i=1}^{n_{\text{candidate}}} \text{ReLU}\left(\frac{V(\xi^{(i)}_{\text{candidate}})}{\rho}-1\right)\label{eq:roa_candidate_surrogate}
\end{align}
in place of maximizing the volume of $\mathcal{S}$ as in \eqref{eq:roa_volume_objective}. By carefully selecting the candidate states $\xi^{(i)}_{\text{candidate}}$, we can control the growth of the ROA. 
We discuss our strategy to select the candidates in more detail in Appendix \ref{subsec:candidate_selection}.
\subsection{Training Controller, Observer and Lyapunov Function} \label{subsec:training_alg}
We denote the parameters being searched during training as $\theta$, including:
\begin{itemize}[noitemsep,topsep=0pt,parsep=0pt,partopsep=0pt]
	\item The weights/biases in the controller network $\phi_\pi$;
	\item (NN Lyapunov function only) The weights/biases in the Lyapunov network $\phi_V$;
	\item The matrix $R$ in \eqref{eq:nn_lyapunov} or \eqref{eq:quadratic_lyapunov}.
    \item (Output feedback only) The weights/biases in the observer network $\phi_{\text{obs}}$. 
\end{itemize}

Mathematically we solve the problem \eqref{eq:lyapunov} through optimizing $\theta$ in the following program
\begin{subequations}
\setlength{\abovedisplayskip}{10pt} \setlength{\belowdisplayskip}{10pt}
\begin{align}
    \min_\theta &\text{ objective \eqref{eq:roa_candidate_surrogate}} \\
    \text{s.t } &\text{constraint \eqref{eq:lyapunov_derivative_roa_piecewise}},
\end{align}
\label{eq:lyapunov_bilevel}%
\end{subequations}
where \eqref{eq:lyapunov_positivity} and \eqref{eq:lyapunov_equilibrium} are satisfied by construction of the Lyapunov function.
Note that the constraint \eqref{eq:lyapunov_derivative_roa_piecewise} should hold for infinitely many $\xi_t\in\mathcal{B}$. To make this infinite-dimensional problem tractable, we adopt the Counter Example Guided Inductive Synthesis (CEGIS) framework \cite{abate2018counterexample,dai2020counter,ravanbakhsh2015counter}, which treats the problem \eqref{eq:lyapunov_bilevel} as a bi-level optimization problem. In essence, the CEGIS framework follows an iterative process. During each iteration, 
\begin{enumerate}[a., noitemsep,topsep=0pt,parsep=0pt,partopsep=0pt]
    \item \textit{Inner problem}: it finds counterexamples $\xi^i_{\text{adv}}$ by maximizing \eqref{eq:lyapunov_derivative_loss} .
    \item \textit{Outer problem}: it refines the parameters $\theta$ by minimizing a surrogate loss function across all accumulated (and hence, finitely many) counterexamples $\xi^i_{\text{adv}}$.
\end{enumerate}
This framework has been widely used in previous works to synthesize Lyapunov or safety certificates \cite{chang2019neural, dai2021lyapunov, abate2018counterexample, ravanbakhsh2015counter}. However, a distinct characteristic of our approach for complex systems is the avoidance of resource-intensive verifiers to find the worst case counterexamples. Instead, we use cheap projected gradient descent (PGD)~\cite{madry2017towards} to find counterexamples that violate~\eqref{eq:lyapunov_derivative_roa_piecewise}. We outline our training algorithm in \ref{algorithm:cegis}.
\begin{algorithm}
	\caption{Training Lyapunov-stable Neural Controllers}
	\label{algorithm:cegis}
	\begin{algorithmic}[1]
        \STATE \textbf{Input:} plant dynamics $f$ and $h$, region-of-interest $\mathcal{B}$, scaling factor $\gamma$, PGD stepsizes $\alpha$ and $\beta$, learning rate $\eta$
        \STATE \textbf{Output:} Lyapunov candidate $V$, controller $\pi$, observer $\phi_{\text{obs}}$ all in $\theta$
        \STATE Training dataset $\mathcal{D}=\varnothing$
            \FOR{$iter = 1, 2, \cdots$}
                \STATE Sample points $\bar{\xi}_j \in \partial \mathcal{B}$
                \FOR{$rho\textunderscore descent = 1, 2, \cdots$}
                \STATE $\bar{\xi}_j \leftarrow \text{Project}_{\partial\mathcal{B}}\left(\bar{\xi}_j - \alpha \cdot \frac{\partial V(\bar{\xi}_j)}{\partial\bar{\xi}}\right)$
                \ENDFOR
                \STATE $\rho = \gamma \cdot \min_j V(\bar{\xi}_j)$
                \STATE Sample counterexamples $\xi^i_{\text{adv}} \in \mathcal{B}$
                \FOR{$adv\textunderscore descent = 1, 2, \cdots$}
                \STATE $\xi^i_{\text{adv}} \leftarrow \text{Project}_{\mathcal{B}}\left(\xi^i_{\text{adv}} + \beta\cdot\frac{\partial 
    L_{\dot{V}}(\xi_{\text{adv}}^i; \rho)}{\partial\xi_{\text{adv}}}\right)$
                \ENDFOR
                \STATE $\mathcal{D} \leftarrow \{\xi_{\text{adv}}^i\} \cup \mathcal{D} $
                \FOR{$epoch = 1, 2, \cdots$}
                \STATE $\theta \leftarrow \theta - \eta \nabla_{\theta}L(\theta; \mathcal{D}, \rho)$
                \ENDFOR
            \ENDFOR
	\end{algorithmic}
\end{algorithm}
\setlength{\textfloatsep}{10pt}

\textbf{CEGIS within $\mathcal{S}$.} 
A major distinction compared to many CEGIS-based approaches is in line 12 and 16, where $L_{\dot{V}}$ only enforces the Lyapunov derivative constraint inside the certifiable ROA which depends on $\rho$. To encourage the sublevel set in \eqref{eq:S_def} to grow beyond $\mathcal{B}$, we parameterize $\rho = \gamma \cdot \min_{\bar{\xi}_j\in\partial \mathcal{B}}V(\bar{\xi}_j)$ with the scaling factor $\gamma > 1$. The largest $\gamma$ that leads to $\hat{\rho}_{\text{max}}$ can be found using bisection.
In line 5$-$9, we sample many points $\bar{\xi}_j$ on the periphery of $\mathcal{B}$, and apply PGD to minimize $V(\bar{\xi}_j)$. In line 10$-$14, we apply PGD again to maximize the violation \eqref{eq:lyapunov_derivative_loss} over randomly sampled $\xi^i_{\text{adv}}\in\mathcal{B}$ to generate a set of counterexamples in the training set $\mathcal{D}$. To make the training more tractable, we start with a small $\mathcal{B}$ and gradually grow its size to cover the entire region we are interested in.

\textbf{Loss functions for training.} In line 16 of Algorithm \ref{algorithm:cegis}, we define the overall surrogate loss function as
\begin{equation}
    L(\theta; \mathcal{D}, \rho) = \sum_{\xi^i_{\text{adv}}\in\mathcal{D}}L_{\dot{V}}(\xi^i_{\text{adv}}; \rho) + c_1 L_{\text{roa}}(\rho) + c_2 \|\theta\|_1 + c_3L_{\text{obs}},
    \label{eq:total_loss}
\end{equation}
where $c_1, c_2, c_3 > 0$ are all given positive constants. The term $L_{\dot{V}}(\bullet)$ is the violation on the Lyapunov derivative condition, defined in \eqref{eq:lyapunov_derivative_loss}; the term $L_{\text{roa}}$ is the surrogate loss for enlarging the region-of-attraction, defined in \eqref{eq:roa_candidate_surrogate}. To ease the difficulty of verification in the next step, we indirectly reduce the Lipschitz constant of the neural networks through the $l_1$ norm regularization $\|\theta\|_1$. Finally, we add $L_{\text{obs}}$ for output feedback case. We observe that it is important to explicitly regulate the observer performance during the training process. Otherwise, the training can easily diverge, and the observer will become unstable. In particular, we define the following observer performance loss
\begin{equation}
    L_{\text{obs}} = \sum_{\xi_t \in \mathcal{C}}\|\hat x_{t+1} - x_{t+1}\|_2,
\end{equation}
so that by minimizing this loss, the NN-based observer will try to predict the state at the next time step accurately. $\mathcal{C}$ is the set of randomly generated internal states within $\mathcal{B}$. 

\textbf{Discussions.} To obtain a certified ROA, prior CEGIS approaches on synthesizing neural-network Lyapunov functions \cite{chang2019neural, dai2021lyapunov} invoked expensive verifiers (SMT or MIP) during training to generate counterexamples. In contrast, we generate a large batch of counterexamples efficiently through the much cheaper PGD attack (requiring gradients only) on GPUs. Our results demonstrate that these cheap counterexamples are sufficient for guiding the training procedure, and the expensive verification process only needs to run once post-training. This finding is aligned with similar observations in the machine learning community \cite{balunovic2019adversarial,de2022ibp}. 
Our integration of heuristic PGD and sound verification combines the benefits of both.

\section{Experiments}
\label{section:results}
\vspace{-0.5em}
We demonstrate the effectiveness of our formulation 
in both verification and training. The proposed formulation leads to larger certifiable ROAs than previous works for multiple dynamical systems. 
The baseline approaches for comparison include: 1) discrete-time neural Lyapunov control (DITL) \cite{wu2023neural} which uses MIP for verification; 2) neural Lyapunov control (NLC) \cite{chang2019neural} employing SMT solvers for verification and counterexample generation; 3) neural Lyapunov control for unknown systems (UNL)\cite{zhou2022neural}.
Table \ref{table:our_runtime} reports the verification runtime for our trained models in Sec.~\ref{sec:exp_train_state_feedback} and~\ref{sec:exp_train_output_feedback}.

\begin{threeparttable}
\centering
\adjustbox{max width=.48\textwidth}{
\begin{tabular}{lcc}
\toprule
System &  Our runtime & DITL runtime\\
\midrule
Pendulum state &  11.3s\textsuperscript{\textdagger} & 7.8s \\
Path tracking & 11.7s & 13.3s \\
Cartpole & 129s & 448s\\
PVTOL\textsuperscript{\textdaggerdbl} & 217s & 1935s\\
\bottomrule
\end{tabular}
}
\begin{tablenotes}
\item[\textdagger] \footnotesize Runtime dominated by $\alpha,\!\beta$-CROWN startup cost.
\item[\textdaggerdbl] \footnotesize We discovered the verification implementation for PVTOL in~\cite{wu2023neural} missed certain regions in $\mathcal{B}$. But for a fair comparison of verification time, we used the same regions as theirs (see Sec.~\ref{sec:ditl_pvtol_bug}).
\end{tablenotes}
\vspace{-0.5em}
\caption{
Verification time comparison for models obtained using DITL. Our verification scales better than DITL to challenging environments because we do not use MIP solvers.
}
\label{table:comparison_runtime}
\vspace*{0.1cm}
\end{threeparttable}
\vspace{-0.5em}
\subsection{Verification of Existing Neural Lyapunov Models}
\vspace{-0.5em}
To show better scalability and larger ROAs of our verification algorithm, we first apply our verification procedure to models obtained using state-of-the-art DITL and compare against their verifier. Table \ref{table:comparison_runtime} records the verification runtime. All the system parameters and dynamics models are the same as in \cite{wu2023neural}.


Similar to~\cite{wu2023neural}, we visualize the ROA for two low-dimensional systems: 1) \textit{Inverted pendulum:} swing up the pendulum to the upright equilibrium $[\theta, \dot{\theta}]=[0, 0]$ with a state-feedback controller. 
2) \textit{Path tracking:} 
a planar vehicle tracks a given path with a state feedback controller. Our novel verification formulation discussed in Sec.~\ref{sec:verification_formulation} results in a larger ROA given the same models, as shown in Figure~\ref{fig:verification_roa_comparison}. Our ROA can nontrivially intersect the boundary of $\mathcal{B}$, represented as the borders of these figures, rather than only touching the boundary at a single point as in the previous approaches.
Compared to the MIP-based verification in DITL, Table~\ref{table:comparison_runtime} shows that our verification procedure offers significant advantages in verification runtime over DITL, especially on more challenging higher-dimensional tasks, such as Cart-pole and PVTOL.

\begin{figure}[t]
\includegraphics[width=0.48\textwidth]{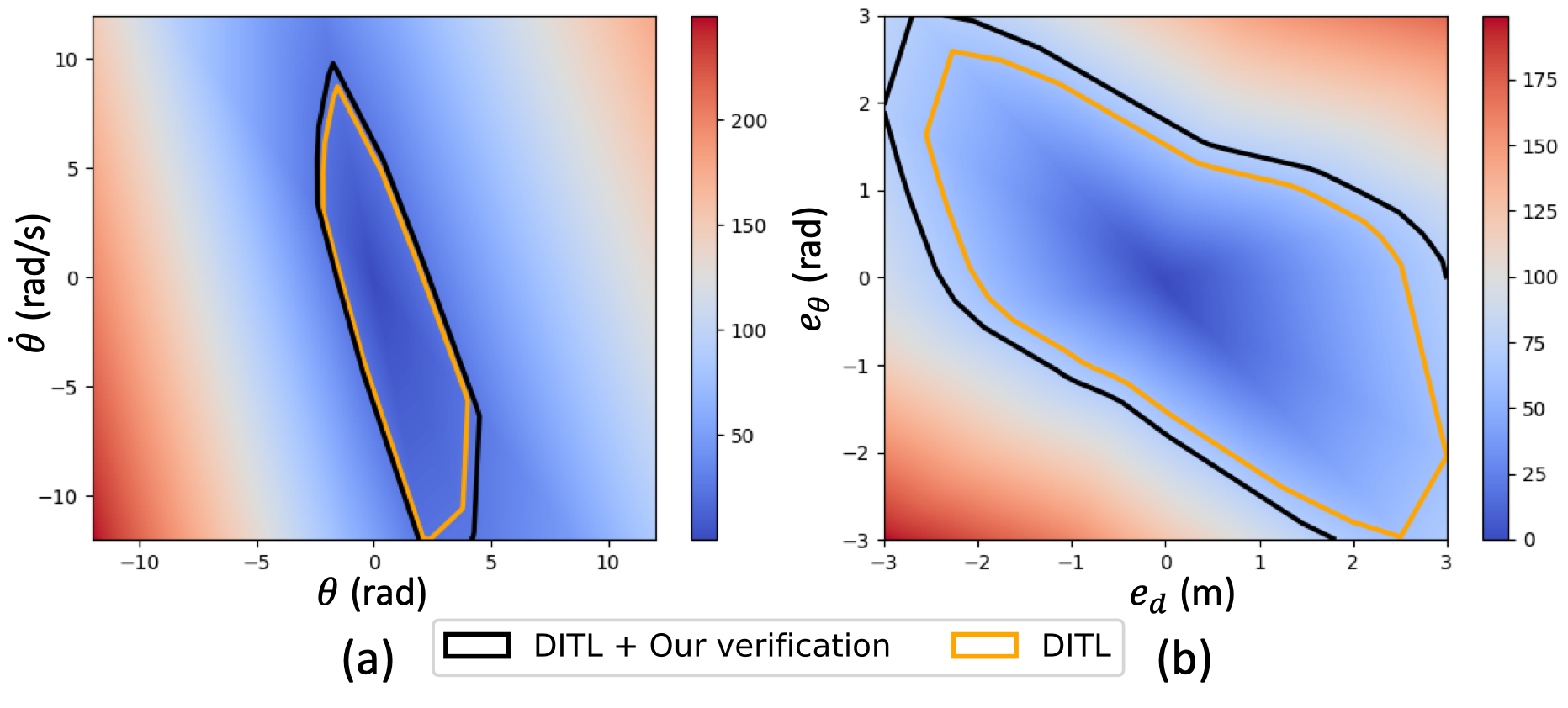}
\caption{Comparison of certified ROA for the \emph{same model} trained using DITL's code. Our novel verification formulation~\eqref{eq:verification} finds a larger ROA $\mathcal{S}$ (black) than $\Tilde{\mathcal{S}}$ (orange) in DITL for (a) inverted pendulum and (b) path tracking.}
\vspace*{-0.2cm}
\label{fig:verification_roa_comparison}
\end{figure}

\subsection{Training and Verification with New Formulation}
\label{sec:exp_train_state_feedback}
Our training and verification formulation, when combined, leads to even larger ROAs. We evaluate the effectiveness of our approach in the following state-feedback systems:
\begin{table}
\centering
\adjustbox{max width=.48\textwidth}{
\begin{tabular}{lc|cc}
\toprule
System &  Runtime & System &  Runtime\\
\midrule
Pendulum state &  33s  & Pendulum output & 94s\\
Quadrotor state &  1.1hrs & Quadrotor output &  8.9hrs \\
Path tracking & 39s  \\
\bottomrule
\end{tabular}}
\caption{
Verification runtime for our trained models. 
}
\label{table:our_runtime}
\end{table}
\begin{figure}
\includegraphics[width=0.48\textwidth]{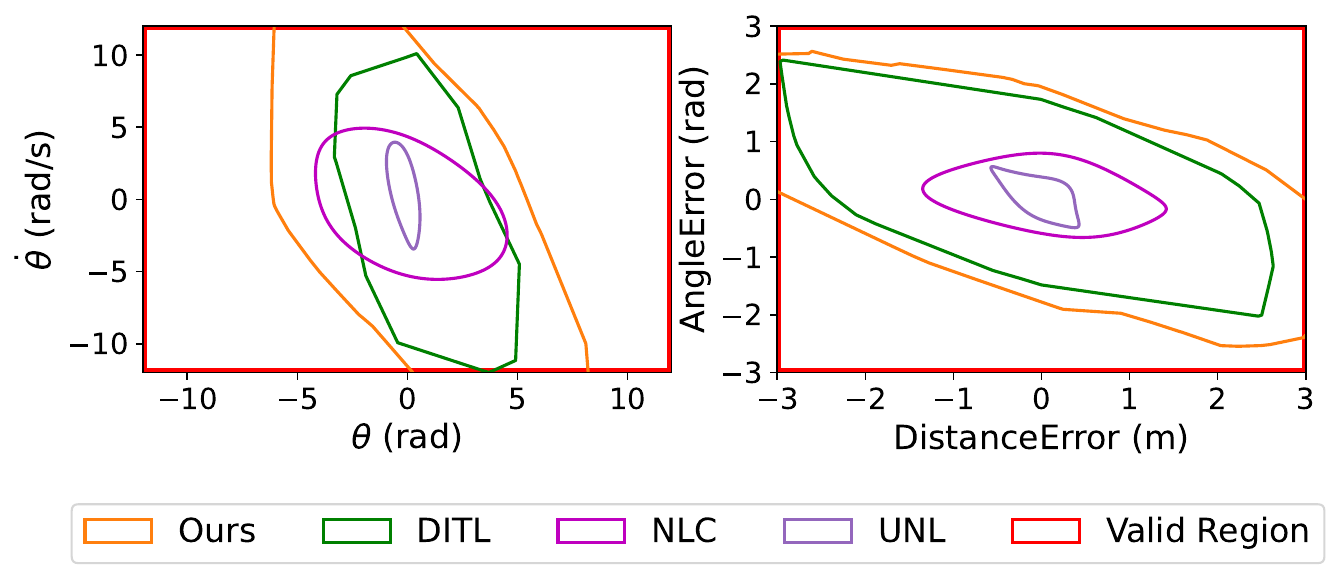}
\caption{Certified ROA for models trained with different methods. Our approach finds the largest ROA among all approaches for the inverted pendulum (left) and vehicle path tracking (right). The comparison to DITL uses the best Lyapunov network released by its original authors.
}
\label{fig:pendulum_path_tracking}
\end{figure}


\textbf{Inverted pendulum and path tracking.}
We compare our trained models for the inverted pendulum and path tracking against multiple baselines reported in \cite{wu2023neural}. Fig. \ref{fig:pendulum_path_tracking} shows that the ROA found by our improved formulation \eqref{eq:S_def} and \eqref{eq:rho_cond} is a strict superset of all the baseline ROAs. Again, our ROAs nontrivially intersect with the boundary of $\mathcal{B}$ (red borders), which is impossible with the formulation in prior works~\cite{chang2019neural, dai2021lyapunov, wu2023neural}. In Appendix \ref{subsec:pendulum_path_tracking_small_torque}, we present certified ROAs for both examples with more challenging torque limits.

\textbf{2D quadrotor.}
\begin{figure}
\centering
	\includegraphics[width=0.43\textwidth]{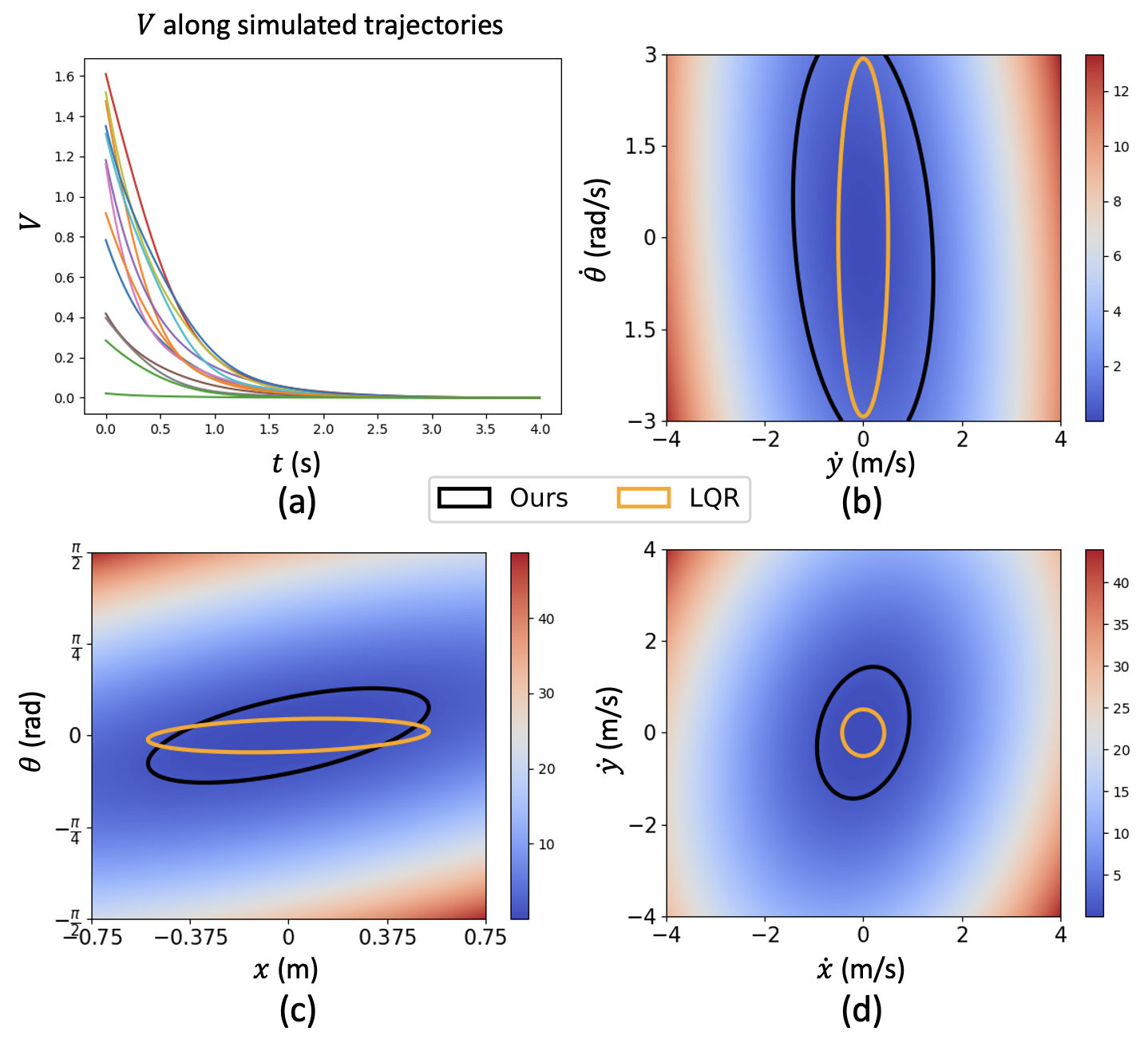}
	\caption{2D quadrotor state feedback: (a) the Lyapunov function keeps decreasing along simulated trajectories using our NN controller; (b-d) the verified region-of-attraction using our approach (black) and LQR (orange) in different 2-dimensional slices.}
	\label{fig:quadrotor_state}
\end{figure}
This is a more challenging setting not included in~\cite{wu2023neural}, where we aim to stabilize a quadrotor to hover at the equilibrium state $[x, y, \theta,\dot{x},\dot{y}, \dot{\theta}] = \mathbf{0}$. 
Our new formulation \eqref{eq:lyapunov_derivative_roa_piecewise} plays a crucial role in verifying this system. The previous formulation  \eqref{eq:lyapunov_derivative_box} enforces the Lyapunov derivative condition over the entire region $\mathcal{B}$, and we find that PGD attack can always find counterexamples during training.
With \eqref{eq:lyapunov_derivative_box}, the learned NN controllers are impossible to verify using the corresponding Lyapunov functions because violations can be detected even during training. In fact, \eqref{eq:lyapunov_derivative_box} requires $\mathcal{B}$ to lie within the true ROA that can be verified by $V$, which is not necessarily true for such a large $\mathcal{B}$ in the high-dimensional space. We simulate the system using our NN controller from various initial conditions within the verified ROA and observe that $V(x)$ always decreases along the simulated trajectories in Fig. \ref{fig:quadrotor_state}a. In Fig. \ref{fig:quadrotor_state}b$-$d, we visualize the certified ROA in different 2D slices and compare with that of the clamped LQR controller verified by the quadratic Lyapunov function obtained from the Riccati solution. 

\subsection{Neural Lyapunov Output Feedback Control}
\label{sec:exp_train_output_feedback}
We now apply our method to the more challenging output feedback control setting, which requires training a controller, an observer, and a Lyapunov function. For the first time in the literature, we demonstrate \emph{certified} neural Lyapunov control with \emph{output feedback} in two settings:

\textbf{Inverted pendulum with angle observation.}
For the output-feedback pendulum, the controller can only observe the angle $\theta$. 
Unlike \cite{chang2019neural, zhou2022neural, wu2023neural} which enforced an input constraint much larger than the gravity torque ($|u| \le 8.15 \cdot mgl$) for state-feedback pendulum, we impose the challenging torque limit $|u| \le \frac{mgl}{3}$. The black contours in Fig. \ref{fig:pendulum_output}a and \ref{fig:pendulum_output}b show a large verified ROA, whose corresponding sublevel set expands beyond $\mathcal{B}$. 

\begin{figure}
\centering
\includegraphics[width=0.43\textwidth]{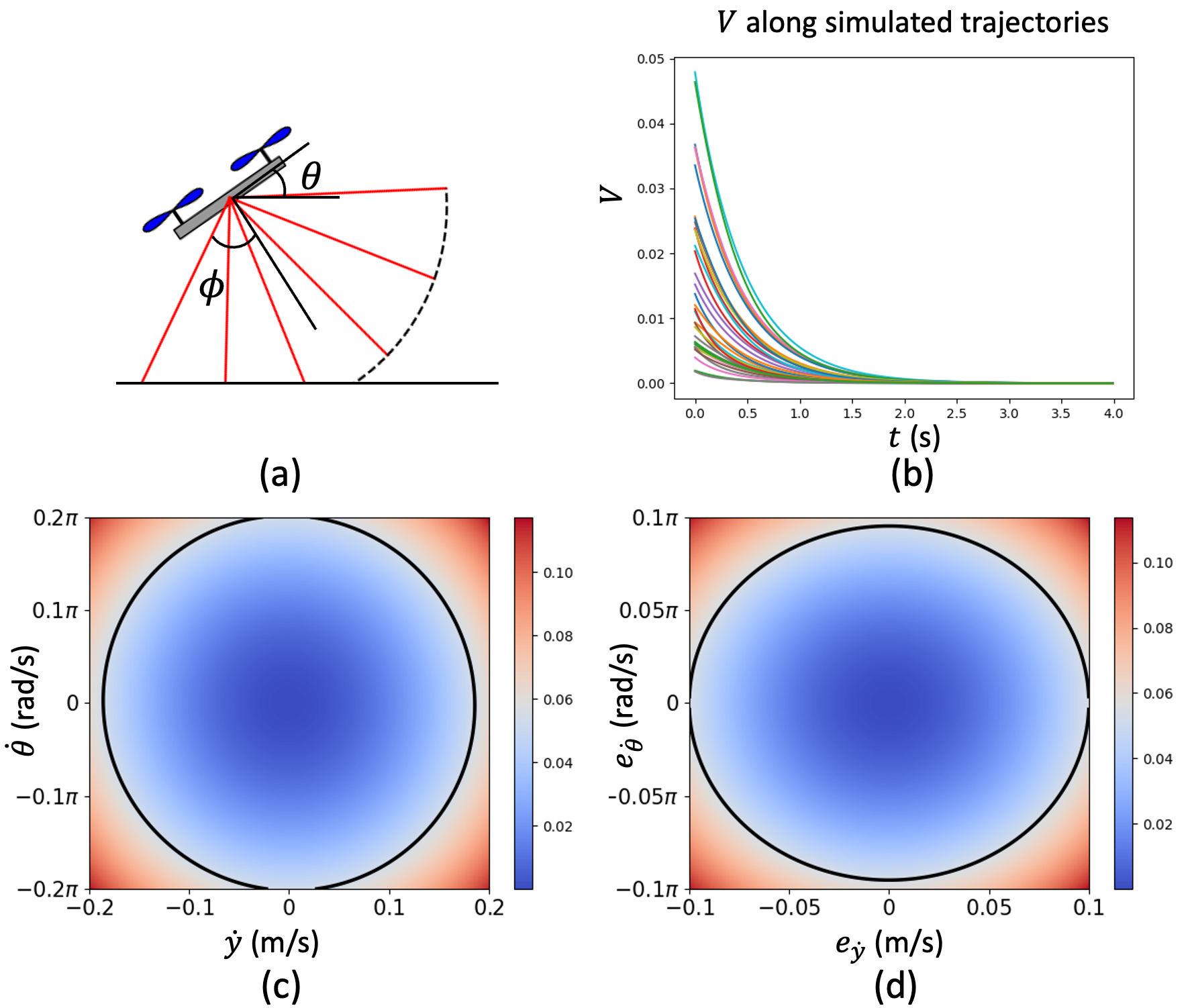}
	\caption{2D quadrotor output feedback: (a) the quadrotor obtains lidar measurements with 6 truncated rays; (b) the Lyapunov function keeps decreasing along simulated trajectories with our NN controller and observer; (c-d) the black contours represent 2-d slices of the certified ROA. Our approach provides the first formal neural certificate for the quadrotor output feedback system. }
	\label{fig:quadrotor_output_roa}
\end{figure}

\textbf{2D quadrotor with a lidar sensor.}
We validate our approach on a more complicated task of steering a 2D quadrotor to cruise at a constant height $y=0$ (the ground is at $y=-1\text{m}$) 
as visualized in Fig. \ref{fig:quadrotor_output_roa}a. 
The quadrotor obtains observations from a lidar sensor, which provides truncated distance along 6 rays in the range $\phi \in [-0.15\pi, 0.15\pi]$ up to a sensing horizon of 5m. 
We remark that SOS-based methods cannot handle such non-polynomial observation function with $\text{clamp}\left(\frac{y}{\cos(\theta - \phi)}, 0, 5 \right)$. 
Similar to the state feedback 2D quadrotor, we compare against the previous formulation \eqref{eq:lyapunov_derivative_box} and observe that verification is impossible since PGD attack can always find adversarial samples during training. In contrast, training using our formulation converges quickly to the stage where PGD attack cannot find adversarial samples. Fig. \ref{fig:quadrotor_output_roa}b demonstrates that the synthesized Lyapunov function using our approach keeps decreasing along the simulated trajectories using our Lyapunov-stable NN controller and observer. The black contours in Fig. \ref{fig:quadrotor_output_roa}c and \ref{fig:quadrotor_output_roa}d represent a decently large ROA verified by $\alpha,\!\beta$-CROWN. 

\section{Conclusion}
In this paper, we propose a novel formulation to efficiently synthesize and verify neural-network controllers and observers with Lyapunov functions, providing one of the earliest formal stability guarantees for output feedback systems in the literature. Our new formulation
actively promotes a large certifiable region-of-attraction. 
Distinct from prior works which rely on resource-intensive verifiers (e.g., SOS, MIP or SMT) to generate counterexamples during training, we incorporate cost-effective adversarial attacks that notably enhance training efficiency. 
Post-training, the Lyapunov conditions undergo a rigorous verification procedure
tailored for NN verification using $\alpha,\!\beta$-CROWN. 

\section*{Limitations}
While our method improves scalability for neural certificates by avoiding resource-intensive solvers for SOS, MIP, or SMT, the system dimensionality still poses a challenge for rigorous certification. Previous methods relying on expensive complete solvers were only able to handle \emph{state} feedback systems with lower dimensions: \cite{zhou2022neural} only dealt with 2-dimensional systems, \cite{chang2019neural} also suffered beyond 2 dimensions (errors and reproducibility issues are reported \href{https://github.com/YaChienChang/Neural-Lyapunov-Control/issues/14}{here}), and \cite{wu2023neural} scaled up to a 4-dimensional cartpole system (as noted in Appendix \ref{sec:ditl_pvtol_bug}, their corrected implementation failed for the 6-dimensional PVTOL). Although our approach extends neural certificates from state feedback to \emph{output} feedback control with 8 dimensions, the dimensions of the addressed systems remain moderate. We are interested in exploring the framework’s potential in higher dimensional systems with more complicated observation functions beyond the truncated lidar readings, such as images or point clouds.

\section*{Acknowledgement}
This work was supported by Amazon PO 2D-12585006, NSF 2048280, 2325121, 2244760, 2331966, 2331967 and ONR N00014-23-1-2300:P00001. Huan Zhang is supported in part by the AI2050 program at Schmidt Sciences (Grant \#G-23-65921) and NSF 2331967. The authors would like to thank Zico Kolter for valuable discussions and insightful feedback on the paper. 
\section*{Impact Statement}
This paper presents work whose goal is to advance the field of verification for neural network control with Lyapunov stability. Our work steps towards providing guarantees for real-world safety-critical control applications.

\bibliography{reference}
\bibliographystyle{icml2024}

\newpage
\appendix
\onecolumn
\section{Proofs}
\subsection{Proof of Theorem \ref{theorem:verification}}
\begin{proof}
\begin{subequations}
\begin{align}
    &(-F(\xi_t) \ge 0  \land  \xi_{t+1}\in \mathcal{B}) \ \lor \ (V(\xi_t) \ge \rho), \ \forall \xi_t\in\mathcal{B}\label{eq:verification_proof1}\\
    \Leftrightarrow& (-F(\xi_t)\ge 0 \land \xi_{t+1}\in \mathcal{B}), \forall (\xi_t\in\mathcal{B}\land V(\xi_t) < \rho)\label{eq:verification_proof2}\\
    \Leftrightarrow &(V(\xi_{t+1}) - V(\xi_t)\le -\kappa V(\xi_t)\land \xi_{t+1}\in\mathcal{B}),\forall \xi_t\in\mathcal{S} \\
    \Leftrightarrow &(V(\xi_{t+1})-V(\xi_t)\le -\kappa V(\xi_t)\land \xi_{t+1}\in\mathcal{B}\land V(\xi_{t+1}) < \rho),\forall \xi_t\in\mathcal{S} \label{eq:22d}\\
    \Leftrightarrow& (V(\xi_{t+1})-V(\xi_t)\le -\kappa V(\xi_t) \land \xi_{t+1}\in\mathcal{S}),\forall \xi_t\in\mathcal{S}
\end{align}
\end{subequations}
Hence $\mathcal{S}$ is an invariant set and the function $V$ decreases exponentially within this invariant set, which proves stability, and $\mathcal{S}$ as an inner approximation of the ROA. The appearance of $V(\xi_{t+1}) < \rho$ in \eqref{eq:22d} arises from the fact that $V(\xi_{t}) < \rho,\forall \xi_t\in\mathcal{S}$ and $V(\xi_{t+1}) \le V(\xi_{t}) < \rho$ by \eqref{eq:lyapunov_derivative}.
\end{proof}
\subsection{Proof of Theorem \ref{theorem:training}}
\begin{proof}
\begin{subequations}
\begin{align}
    &\min (\text{ReLU}\left(F(\xi_t)\right) + c_0H(\xi_{t+1}), \ \rho-V(\xi_t)) \le 0 \;\forall \xi_t\in\mathcal{B} \\
    \Leftrightarrow &(\text{ReLU}\left(F(\xi_t)\right) + c_0H(\xi_{t+1}) \le 0) \lor (\rho-V(\xi_t) \le 0) \;\forall \xi_t\in\mathcal{B} \\ 
    \Leftrightarrow &(\text{ReLU}\left(F(\xi_t)\right) \le 0 \land c_0H(\xi_{t+1}) \le 0) \lor (\rho-V(\xi_t) \le 0) \;\forall \xi_t\in\mathcal{B} \label{eq:nonnegative_relu_h}\\
    \Leftrightarrow &(F(\xi_t) \le 0 \land H(\xi_{t+1}) \le 0) \lor (\rho-V(\xi_t) \le 0) \;\forall \xi_t\in\mathcal{B} \\
    \Leftrightarrow &(F(\xi_t) \le 0 \land H(\xi_{t+1}) \le 0) , \forall (\xi_t\in\mathcal{B}\land V(\xi_t) < \rho) \\
    \Leftrightarrow &(F(\xi_t) \le 0 \land H(\xi_{t+1}) \le 0) , \forall \xi_t\in\mathcal{S}
\end{align}
\end{subequations}
\eqref{eq:nonnegative_relu_h} follows from the fact that both $\text{ReLU}\left(F(\xi_t)\right)$ and $H(\xi_{t+1})$ are nonnegative.
\end{proof}
\section{Experiment Details}
\begin{table}
\centering
\begin{tabular}{lccccc}
\toprule
System & Feedback & Lyapunov function & controller & observer & Region-of-interest (upper limit)\\
\midrule
Pendulum & State & (16, 16, 8) & (8, 8, 8, 8) & --- & $[12, 12]$\\ 
Path tracking & State & (16, 16, 8) & (8, 8, 8, 8)& --- & $[3, 3]$\\ 
Quadrotor & State & Quadratic & (8, 8) & --- & $[0.75, 0.75, \frac{\pi}{2}, 4, 4, 3]$\\ 
Pendulum & Output  & Quadratic & (8, 8, 8) & (8, 8)& $[0.4\pi, 0.4\pi, 0.1\pi, 0.1\pi]$\\ 
Quadrotor & Output & Quadratic & (8, 8) &  (8, 8) & $[0.1, 0.2\pi,0.2, 0.2\pi, 0.05, 0.1\pi, 0.1, 0.1\pi]$\\
\bottomrule
\end{tabular}
\caption{Neural network size and region-of-interest for each task. The tuples denote the number of neurons in each layer of the neural network. All the networks use the leaky ReLU activation function.
}
\label{table:nn_size}
\end{table}
\subsection{Candidate State Selection for Growing ROA}\label{subsec:candidate_selection}
On the one hand, the candidate states that we hope to be covered in the invariant set $\mathcal{S}$ should be diverse enough to encourage the ROA to grow in all directions; on the other hand, they should not be irregularly spread across the entire state space because such candidates might shape the ROA in conflicting directions and deteriorate the satisfaction of the Lyapunov derivative condition \eqref{eq:lyapunov_derivative}. We require the candidate states to have the same distance from the goal state in the metric of the Lyapunov function value and start by sampling states on the 1-level set of a reference Lyapunov function $V_\text{ref}$. For state feedback, we choose $V_\text{ref}$ to be the LQR Lyapunov function $x^T S x$ ($S$ is the solution to the Riccati equation); for output feedback, we select $V_\text{ref} = x^T S x + e^T P^{-1}e$ ($P$ is the asymptotic state variance at the goal state obtained by solving the discrete Riccati equation). After the NN Lyapunov function is trained to achieve a reasonable ROA,  we can sample states slightly outside the current ROA as candidates.

\subsection{Pendulum State Feedback \& Path Tracking with Challenging Torque Limits}\label{subsec:pendulum_path_tracking_small_torque}
\begin{figure}
\centering
	\includegraphics[width=0.48\textwidth]{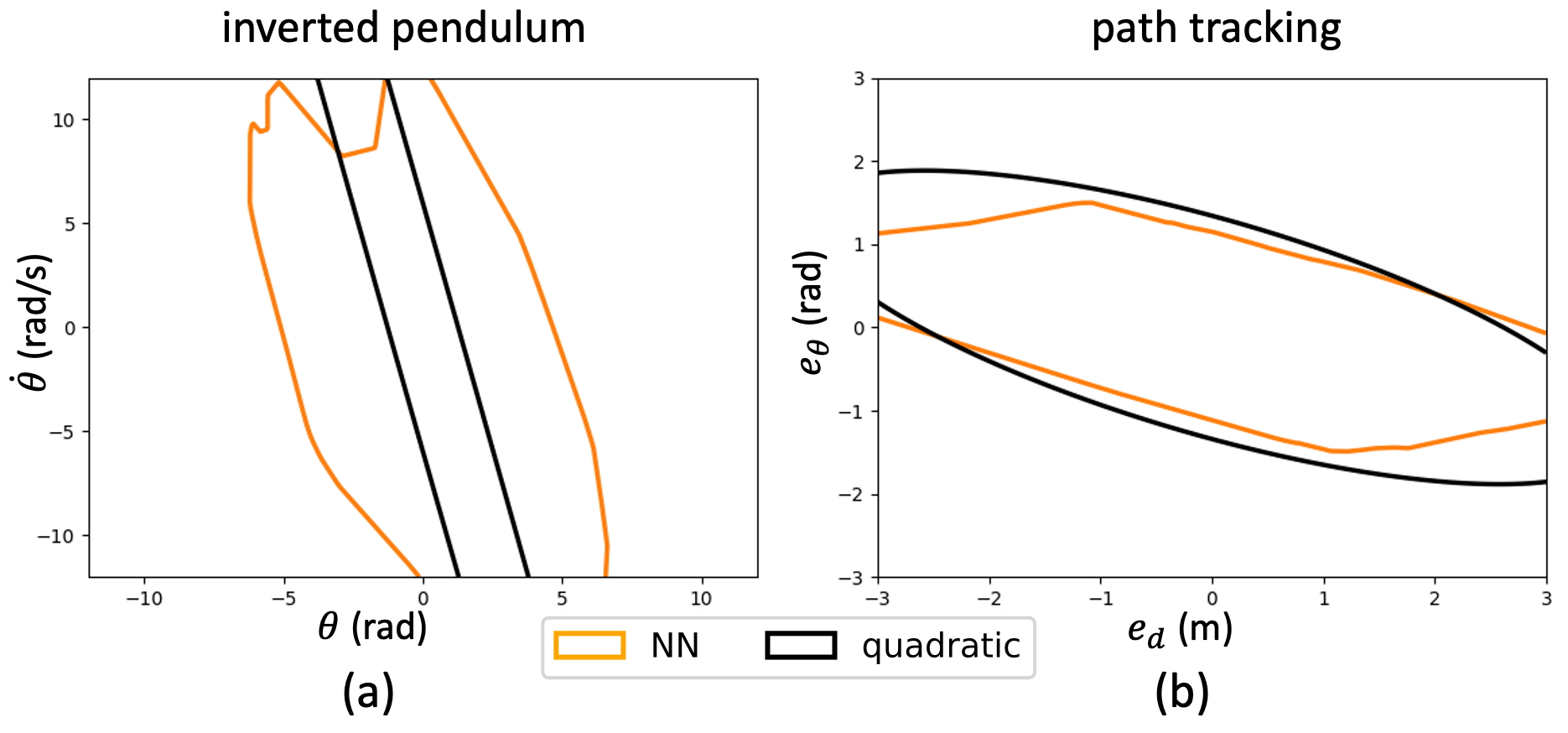}
	\caption{Comparison of certified ROAs obtained using NN and quadratic Lyapunov functions: (a) inverted pendulum with torque limit $|u|\le 1.02 \cdot mgl$ (b) path tracking with $|u|\le \frac{L}{v}$.}
	\label{fig:pendulum_path_tracking_small_torque}
\end{figure}
In Sec. \ref{sec:exp_train_state_feedback}, we provide certified ROAs for inverted pendulum and path tracking with large (easy) torque limits as a fair comparison to all the baselines ($|u| \le 8.15 \cdot mgl$ for pendulum and $|u| \le 1.68\frac{L}{v}$ for path tracking). In Fig. \ref{fig:pendulum_path_tracking_small_torque}, we demonstrate ROAs for small (challenging) torque limits verified with both neural and quadratic Lyapunov functions. While neural Lyapunov functions can be more expressive, quadratic Lyapunov functions are often easier to train and have better interpretability. Our approach aims to leverage the strengths of both representations, allowing practitioners to select the most suitable form based on their specific requirements and trade-offs between expressivity, convergence, and interpretability.
\subsection{Pendulum Output Feedback}
In Fig. \ref{fig:pendulum_output_umax}, we visualize the phase portrait and certified ROA with a larger torque limit $|u|\le 1.36 \cdot mgl$ . We synthesize a quadratic Lyapunov function in the region-of-interest $\pm[0.7\pi, 0.7\pi, 0.175\pi, 0.175\pi]$ and an NN Lyapunov function in $\pm[\pi, \pi, 0.25\pi, 0.25\pi]$. With such a large control constraint, the phase portrait demonstrates that starting from many initial states (even outside the verified ROA), the system can always converge to the upright equilibrium with the synthesized controller and observer. This result suggests that our novel loss function \eqref{eq:total_loss} both leads to a large certified ROA and enables good generalization.
\begin{figure}
\centering
	\includegraphics[width=0.96\textwidth]{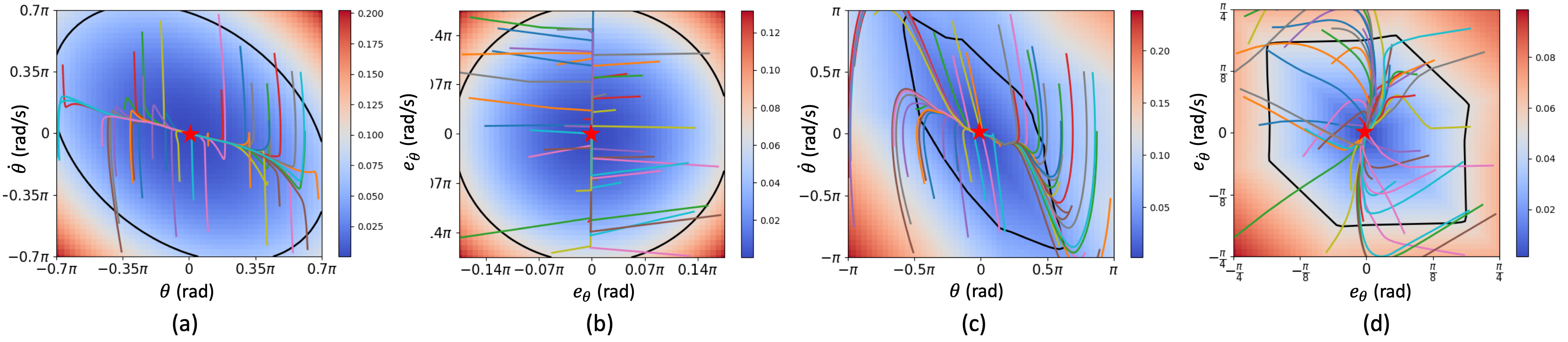}
	\caption{Pendulum output feedback: phase portrait and certified ROA with torque limit $|u|\le 1.36 \cdot mgl$ using (a)-(b) quadratic Lyapunov function and (c)-(d) neural-network Lyapunov function.}
	\label{fig:pendulum_output_umax}
\end{figure}
\subsection{2D Quadrotor Output Feedback}
In Fig. \ref{fig:quadrotor_output_snapshot}, we visualize the snapshots of the quadrotor stabilized by our NN controller and observer with decently large initial state estimation error. We observe that the NN controller and observer generalize well outside of the certified ROA, empirically steering the quadrotor to cruise at the constant height for most of the states within the box region.
\begin{figure}
\centering
	\includegraphics[width=0.36\textwidth]{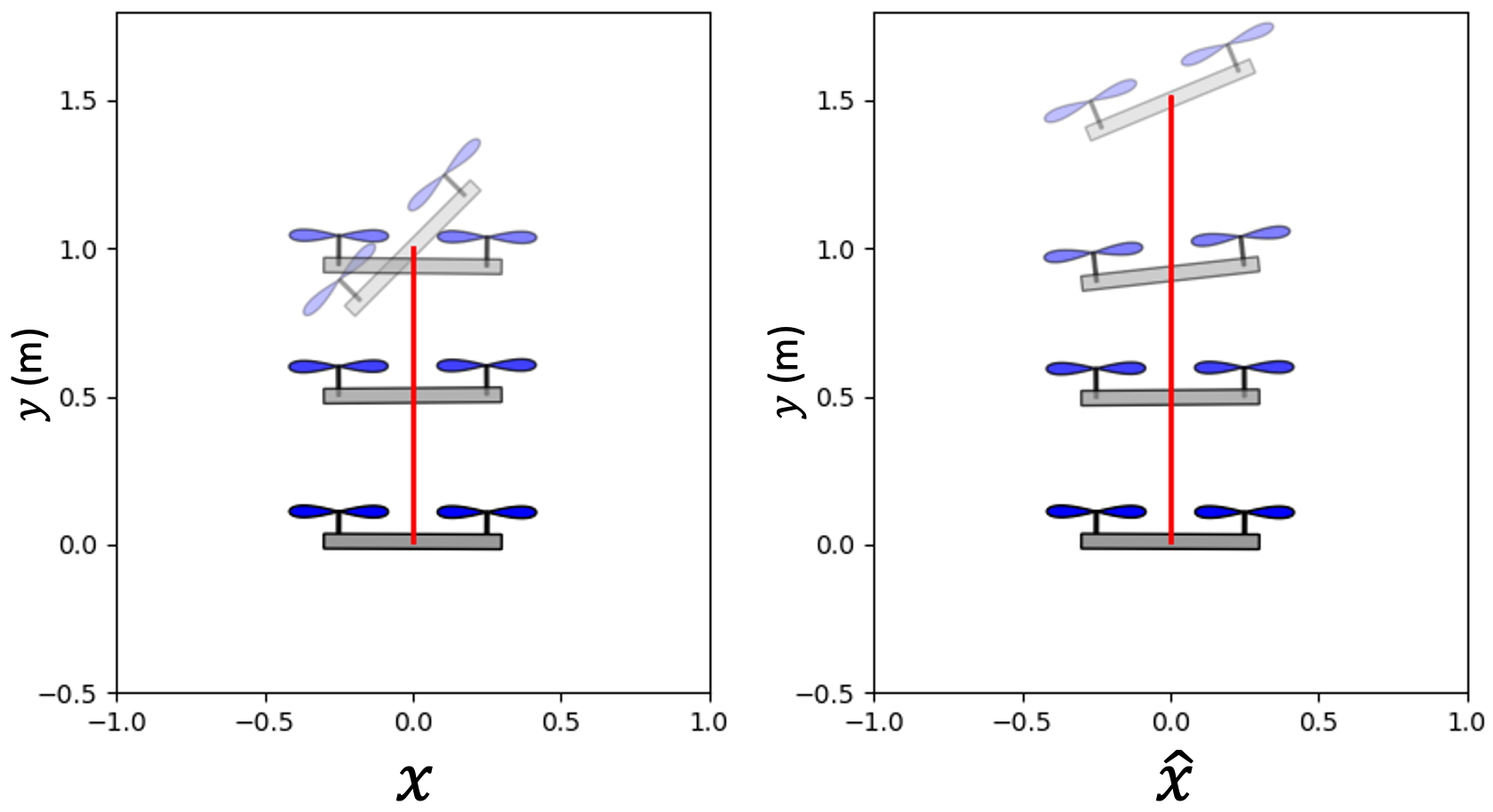}
	\caption{Snapshots of simulating 2D quadrotor with lidar sensor using our NN controller and observer. The red curve is the quadrotor's trajectory to balance at $y=0$. The left plot illustrates the true states and the right plot shows the state estimates. The initial state estimate error is $[0.5, -\frac{\pi}{8}, 0.1, \frac{\pi}{8}]$.}
	\label{fig:quadrotor_output_snapshot}
\end{figure}
\subsection{Validation region $\mathcal{B}$ in Fig.\ref{fig:pendulum_path_tracking}}
We use the validation region $\mathcal{B}$ as reported in each paper, detailed in Table \ref{table:baseline_b_size}.
\begin{table}[ht]
\centering
\begin{tabular}{lcc}
\toprule
& Inverted Pendulum & Path tracking\\ 
\midrule
Ours & $\|x\|_\infty \le 12$ & $\|x\|_\infty \le 3$\\ 
DITL & $\|x\|_\infty \le 12$ & $\|x\|_\infty \le 3$\\
NLC & $\|x\|_2 \le 6$ & $\|x\|_2 \le 1.5$\\
UNL & $\|x\|_2 \le 4$ & $\|x\|_2\le 0.8$\\
\bottomrule
\end{tabular}
\caption{Validation region $\mathcal{B}$ in each approach.
}
\label{table:baseline_b_size}
\end{table}
\subsection{Region for PVTOL in \cite{wu2023neural}}
\label{sec:ditl_pvtol_bug}
We mentioned in \Cref{table:comparison_runtime} that there is an implementation issue in \cite{wu2023neural} regarding region $\mathcal{B}$ for PVTOL. \cite{wu2023neural} takes $0.1\leq\|x\|_\infty\leq 1$ for $\mathcal{B}$, where $\|x\|_\infty$ should be the \emph{maximum} absolute value among all the dimensions in $x$. We found that the code of \cite{wu2023neural}\footnote{\url{https://github.com/jlwu002/nlc_discrete/blob/main/pvtol.py}} mistakenly implemented $\|x\|_\infty$ as the \emph{minimum} absolute value among all the dimensions in $x$ when enforcing the  $\|x\|_\infty\geq 0.1$ constraint, which makes the resulting $\mathcal{B}$ much smaller than desired. 
We found the issue in their code released by 12/10/2023, and we were able to reproduce the results on their paper using this version of code with an incorrect $\mathcal{B}$. While the implementation issue has been fixed in their current version of code released on 12/29/2023, we found that the new version is not able to successfully finish training the model on PVTOL with the correct $\mathcal{B}$. 

\end{document}